\newcommand{\poly}{\mathrm{poly}}
\newcommand{\var}{\mathsf{var}}
\newcommand{\MM}{\mathrm{MM}}
\newtheorem{theorem}{Theorem}[section]
\newtheorem{lemma}[theorem]{Lemma}
\newtheorem{corollary}[theorem]{Corollary}
\newtheorem{proposition}[theorem]{Proposition}
\newtheorem{observation}[theorem]{Observation}
\newtheorem{clm}[theorem]{Claim}
\title{QBF as an Alternative to Courcelle's Theorem}
\author{Michael Lampis\thanks{Université Paris-Dauphine, PSL Research University, CNRS, UMR 7243} \and Stefan Mengel\thanks{CNRS, CRIL UMR 8188} \and Valia Mitsou\thanks{Université Paris-Diderot, IRIF, CNRS, UMR 8243}}
\begin{document}

\maketitle

\begin{abstract}
We propose reductions to quantified Boolean formulas (QBF) as a new approach to showing fixed-parameter linear algorithms for problems parameterized by treewidth. We demonstrate the feasibility of this approach by giving new algorithms for several well-known problems from artificial intelligence that are in general complete for the second level of the polynomial hierarchy. By reduction from QBF we show that all resulting algorithms are essentially optimal in their dependence on the treewidth. Most of the problems that we consider were already known to be fixed-parameter linear by using Courcelle's Theorem or dynamic programming, but we argue that our approach has clear advantages over these techniques: on the one hand, in contrast to Courcelle's Theorem, we get concrete and tight guarantees for the runtime dependence on the treewidth. On the other hand, we avoid tedious dynamic programming and, after showing some normalization results for CNF-formulas, our upper bounds often boil down to a few lines.
\end{abstract}

\section{Introduction}

Courcelle's seminal theorem~\cite{Courcelle90} states that every graph property definable in monadic second order logic can be decided in linear time on graphs of constant treewidth. Here treewidth is the famous width measure used to measure intuitively how similar a graph is to a tree. While the statement of Courcelle's Theorem might sound abstract to the unsuspecting reader, the consequences are tremendous. Since a huge number of computational problems can be encoded in monadic second order logic, this gives automatic linear time algorithms for a wealth of problems in such diverse fields as combinatorial algorithms, artificial intelligence and databases; out of the plethora of such papers let us only cite~\cite{GottlobPW10,Dunne07} that treat problems that will reappear in this paper. This makes Courcelle's Theorem one of the cornerstones of the field of parameterized algorithms. 

Unfortunately, its strength comes with a price: while the runtime dependence on the size of the problem instance is linear, the dependence on the treewidth is unclear when using this approach. Moreover, despite recent progress (see e.g. the survey~\cite{LangerRRS14}) Courcelle's Theorem is largely considered impractical due to the gigantic constants involved in the construction. Since generally these constants are unavoidable~\cite{FrickG04}, showing linear time algorithms with Courcelle's Theorem can hardly be considered as a satisfying solution.

As a consequence, linear time algorithms conceived with the help of Courcelle's Theorem are sometimes followed up with more concrete algorithms with more explicit runtime guarantees often by dynamic programming or applications of a datalog approach~\cite{DvorakPW12,GottlobPW10,JaklPRW08}. Unfortunately, these hand-written algorithms tend to be very technical, in particular for decision problems outside of $\mathsf{NP}$. Furthermore, even this meticulous analysis usually gives algorithms with a dependance on treewidth that is a tower of exponentials.

The purpose of this paper is two-fold. On the one hand we propose reductions to QBF combined with the use of a known QBF-algorithm by Chen~\cite{Chen04} as a simple approach to constructing linear-time algorithms for problems beyond $\mathsf{NP}$ parameterized by treewidth. In particular, we use the proposed method in order to construct (alternative) algorithms for a variety of problems stemming from artificial intelligence: abduction, circumscription, abstract argumentation and the computation of minimal unsatisfiable sets in unsatisfiable formulas. The advantage of this approach over Courcelle's Theorem or tedious dynamic programming is that the algorithms we provide are almost straightforward to produce, while giving bounds on the treewidth that asymptotically match those of careful dynamic programming. On the other hand, we show that our algorithms are asymptotically best possible, giving matching complexity lower bounds. 


Our algorithmic approach might at first sight seem surprising: since QBF with a fixed number of alternations is complete for the different levels of the polynomial hierarchy, there are trivially reductions from all problems in that hierarchy to the corresponding QBF problem. So what is new about this approach? The crucial observation here is that in general reductions to QBF guaranteed by completeness do not maintain the treewidth of the problem. 
Moreover, while Chen's algorithm runs in linear time, there is no reason for the reduction to QBF to run in linear time which would result in an algorithm with overall non-linear runtime.

The runtime bounds that we give are mostly of the form $2^{2^{O(k)}}n$ where $k$ is the treewidth and $n$ the size of the input. Furthermore, starting from recent lower bounds for QBF~\cite{LampisM17}, we also show that these runtime bounds are essentially tight as there are no algorithms with runtime $2^{2^{o(k)}}2^{o(n)}$ for the considered problems. Our lower bounds are based on the \emph{Exponential Time Hypothesis (ETH)} which posits that there is no algorithm for 3SAT with runtime $2^{o(n)}$ where $n$ is the number of variables in the input. ETH is by now widely accepted as a standard assumption in the fields of exact and parameterized algorithms for showing tight lower bounds, see e.g., the survey~\cite{LokshtanovMS11}. 
We remark that our bounds confirm the observation already made in~\cite{MarxM16} that problems complete for the second level of the polynomial hierarchy parameterized by treewidth tend to have runtime double-exponential in the treewidth.

As a consequence, the main contribution of this paper is to show that reductions to QBF can be used as a simple technique to show algorithms with essentially optimal runtime for a wide range of problems.

\paragraph{Our Contributions.}
We show upper bounds of the form $2^{2^{O(k)}}n$ for instances of treewidth $k$ and size $n$ for abstract argumentation, abduction, circumscription and the computation of minimal unsatisfiable sets in unsatisfiable formulas. For the former three problems it was already known that there are linear time algorithms for bounded treewidth instances: for abstract argumentation, this was shown in~\cite{Dunne07} with Courcelle's theorem and a tighter upper bound of the form $2^{2^{O(k)}}n$ was given by dynamic programming in~\cite{DvorakPW12}. For abduction, there was a linear time algorithm in~\cite{GottlobPW10} for all abduction problems we consider and a $2^{2^{O(k)}}n$ algorithm based on a datalog encoding for some of the problems. The upper bound that we give for so-called \emph{necessity} is new. For circumscription, a linear time algorithm was known~\cite{GottlobPW10} but we are the first to give concrete runtime bounds. Finally, we are the first to give upper bounds for minimal unsatisfiable subsets for CNF-formulas of bounded treewidth.

We complement our upper bounds with ETH-based lower bounds for all problems mentioned above, all of which are the first such bounds for these problems.

Finally, we apply our approach to abduction with $\subseteq$-preferences but giving a linear time algorithm with triple exponential dependence on the treewidth, refining upper bounds based on Courcelle's theorem~\cite{GottlobPW10} by giving an explicit treewidth dependence.

\section{Preliminaries}

In this section, we only introduce notation that we will use in all parts of the paper. The background for the problems on which we demonstrate our approach will be given in the individual sections in which these problems are treated. 
\subsection{Treewidth}

Throughout this paper, all graphs will be undirected and simple unless explicitly stated otherwise. A \emph{tree decomposition} $(T, (B_t)_{t\in T})$ of a graph $G=(V,E)$ consists of a tree $T$ and a subset $B_t\subseteq V$ for every node $t$ of $T$ with the following properties:
\begin{itemize}
 \item every vertex $v\in V$ is contained in at least one set $B_t$,
 \item for every edge $uv\in E$, there is a set $B_t$ that contains both $u$ and $v$, and
 \item for every $v\in V$, the set $\{t\mid v\in B_t\}$ induces a subtree of $T$.
\end{itemize}
The last condition is often called the connectivity condition. The sets $B_t$ are called \emph{bags}. The \emph{width} of a tree decomposition is $\max_{t\in T}(|B_t|)-1$. The \emph{treewidth} of $G$ is the minimum width of a tree decomposition of $G$. We will sometimes tacitly use the fact that any tree decomposition can always be assumed to be of size linear in $|V|$ by standard simplifications.
Computing the treewidth of a graph is $\mathsf{NP}$-hard~\cite{ArnborgP89}, but for every fixed $k$ there is a linear time algorithm that decides if a given graph has treewidth at most $k$ and if so computes a tree decomposition witnessing this~\cite{Bodlaender96}.

A tree decomposition is called \emph{nice} if every node $t$ of $T$ is of one of the following types:
\begin{itemize}
 \item \textbf{leaf node:} $t$ is a leaf of $T$.
 \item \textbf{introduce node:} $t$ has a single child node $t'$ and $B_t=B_{t'}\cup \{v\}$ for a vertex $v\in V\setminus B_{t'}$. 
 \item \textbf{forget node:} $t$ has a single child node $t'$ and $B_t=B_{t'}\setminus \{v\}$ for a vertex $v\in B_{t'}$. 
 \item \textbf{join node:} $t$ has exactly two children $t_1$ and $t_2$ with $B_{t} = B_{t_1} = B_{t_2}$.
\end{itemize}
Nice tree decompositions were introduced in~\cite{Kloks94} where it was also shown that given a tree decomposition of a graph $G$, one can in linear time compute a nice tree decomposition of $G$ with the same width.

\subsection{CNF formulas}

A \emph{literal} is a propositional variable or the negation of a propositional variable. A \emph{clause} is a disjunction of literals and a CNF-formula is a conjunction of clauses. 
For technical reasons we assume that there is an injective mapping from the variables in a CNF formula $\phi$ to $\{0, \ldots, cn\}$ 
 for an arbitrary but fixed constant $c$ where $n$ is the number of variables in $\phi$ and that we can evaluate this mapping in constant time. This assumption allows us to easily create lists, in linear time in $n$, which store data assigned to the variables that we can then look up in constant time. Note that formulas in the DIMACS format~\cite{challenge1993satisfiability}, the standard encoding for CNF formulas, generally have this assumed property. Alternatively, we could use perfect hashing to assign the variables to integers, but this would make some of the algorithms randomized.

Let $\phi$ and $\phi'$ be two CNF formulas. We say that $\phi$ is a projection of $\phi'$ if and only if $\var(\phi) \subseteq \var(\phi')$ and $a: \var(\phi)\rightarrow \{0,1\}$ is a model of $\phi$ if and only if $a$ can be extended to a model of $\phi'$.


\begin{figure}[h]
  \begin{center}
    \subfloat[Primal graph]{
      \includegraphics[width=0.25\textwidth]{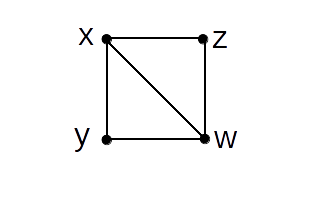}
      \label{fig:primal}
                         }
   \hspace*{0.3in}
    \subfloat[Incidence graph]{
      \includegraphics[width=0.25\textwidth]{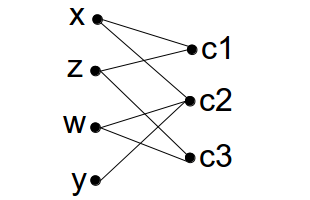}
      \label{fig:incidence}
                         }
\caption{Primal and incidence graphs for $\phi=(\neg x \vee z) \wedge (x \vee y \vee \neg w) \wedge (\neg z \vee w)$.}
\label{fig:graphs}
  \end{center}
\end{figure}

To every CNF formula $\phi$ we assign a graph called \emph{primal graph} whose vertex set is the set of variables of $\phi$. Two vertices are connected by an edge if and only if they appear together in a clause of $\phi$ (see Figure~\ref{fig:primal}). The \emph{primal treewidth} of a CNF formula is the treewidth of its primal graph.
We will also be concerned with the following generalization of primal treewidth: the \emph{incidence graph} of a CNF formula has as vertices the variables \emph{and} the clauses of the formula. Two vertices are connected by an edge if and only if one vertex is a variable and the other is a clause such that the variable appears in the clause (see Figure~\ref{fig:incidence}). The \emph{incidence treewidth} of a formula is then the treewidth of its incidence graph.

It is well-know that the primal treewidth of a CNF-formula can be arbitrarily higher than the incidence treewidth (for example consider a single clause of size $n$). The other way round, formulas of primal treewidth $k$ can easily be seen to be of incidence treewidth at most $k+1$~\cite{FischerMR08}.

\subsection{From primal to incidence treewidth}\label{sec:CNF}


While in general primal and incidence treewidth are two different parameters, in this section we argue that when dealing with CNF formulas we don't need to distinguish between the two: first, since incidence treewidth is more general, the lower bounds for primal treewidth transfer automatically to it; second, while the same cannot generally be said for algorithmic results, it is easy to see that the primal treewidth is bounded by the product of the incidence treewidth the arity (clause size), so it suffices to show that we can transform any CNF formula to an equivalent one having bounded arity while roughly maintaining its incidence treewidth. Proposition \ref{prop:threeCNF} suggests a linear time transformation achieving this. In the following we can then interchangably work with incidence treewidth or primal treewidth, whichever is more convenient in the respective situation.


\begin{proposition}\label{prop:threeCNF}
There is an algorithm that, given a CNF formula $\phi$ of incidence treewidth $k$, computes in time $2^{O(k)} |\phi|$ a 3CNF formula $\phi'$ of incidence treewidth $O(k)$ with $\var(\phi) \subseteq \var(\phi')$ such that $\phi$ is a projection of $\phi'$.
\end{proposition}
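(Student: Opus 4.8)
The plan is to turn $\phi$ into a 3CNF formula by splitting every long clause into a chain of ternary clauses, but --- crucially --- to perform this splitting \emph{along a tree decomposition of the incidence graph} rather than in a naive left-to-right fashion. First I would run a constant-factor treewidth approximation (which produces a decomposition of width $O(k)$ in time $2^{O(k)}|\phi|$) and make it nice, obtaining a tree decomposition $(T,(B_t))$ of the incidence graph of $\phi$ of width $O(k)$; this is the only place where the factor $2^{O(k)}$ in the running time is needed, since everything afterwards is linear. For a clause $C$ of $\phi$, let $T_C$ be the subtree of $T$ induced by the bags containing the vertex $C$. By the edge condition, every variable $x\in\var(C)$ lies in some bag of $T_C$, and I fix for each such $x$ one node $t_x\in T_C$ with $x\in B_{t_x}$.

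The transformation replaces each $C$ as follows. I root $T_C$, introduce a fresh variable $z_t^C$ for every $t\in T_C$, and let $X_t=\{x\in\var(C): t_x=t\}$ be the variables attributed to $t$. For every $t\in T_C$ with children $t_1,t_2$ in $T_C$ I add the implication clause $R_{t,C}=\big(\neg z_t^C \vee \bigvee_{x\in X_t}\ell_x \vee z_{t_1}^C \vee z_{t_2}^C\big)$, where $\ell_x$ is the literal of $x$ occurring in $C$ and absent children are dropped, together with the unit clause $(z_{r_C}^C)$ at the root $r_C$ of $T_C$. Intuitively $z_t^C$ signals that some attributed literal of $C$ in the subtree below $t$ is satisfied, and the root clause demands that this happens somewhere. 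Since $X_t\subseteq B_t$ and $|B_t|=O(k)$, each $R_{t,C}$ has arity $O(k)$, so finally I split every $R_{t,C}$ into ternary clauses by the standard chain construction, introducing $O(k)$ auxiliary variables per clause. Establishing that $\phi$ is a projection of the resulting $\phi'$ (noting $\var(\phi)\subseteq\var(\phi')$ by construction) is then a short two-directional induction: any assignment satisfying $\phi$ extends by setting $z_t^C=1$ exactly when the subtree of $T_C$ below $t$ contains an attributed, satisfied literal, which satisfies every $R_{t,C}$ and every root clause; conversely, in any model of $\phi'$ the forced value $z_{r_C}^C=1$ propagates down the clauses $R_{t,C}$ until it meets a literal of $C$, so $C$ is satisfied. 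The chain-splitting step is projection-preserving by the usual Tseitin argument.

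The heart of the argument, and the step I expect to be most delicate, is bounding the incidence treewidth of $\phi'$ by $O(k)$; this is exactly where splitting along $T$ pays off, since a naive chain would impose an arbitrary order on the literals of $C$ whose variables may sit in far-apart bags of $T_C$, forcing many bags to be merged and blowing up the width. I would build the new decomposition by expanding $(T,(B_t))$: at node $t$ I use the bag consisting of the variables of $B_t$, the variables $z_t^C$ for the $O(k)$ clauses $C\in B_t$, and the child-variables $z_{t_1}^C,z_{t_2}^C$ required by the clauses $R_{t,C}$, which is $O(k)$ vertices in total. Keeping each $z_t^C$ in the bags of both $t$ and its parent restores the connectivity condition, and the ternary-splitting gadget of each $R_{t,C}$ can be threaded through a short path of bags that carries this $O(k)$-sized set together with a constant-size sliding window of auxiliary variables. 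Summing the $O(k)$ work per (node, clause) incidence over the $O(|\phi|)$ nodes of the decomposition gives total size and running time $\poly(k)\,|\phi|$, so the whole procedure runs in $2^{O(k)}|\phi|$ and outputs a 3CNF formula of incidence treewidth $O(k)$, as required.
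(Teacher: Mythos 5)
Your proof is correct and follows essentially the same route as the paper's: both perform the classic SAT-to-3SAT clause-splitting reduction \emph{along a tree decomposition of the incidence graph}, so that the fresh variables created for a long clause $C$ stay local to the bags of the subtree $T_C$, which is exactly what keeps the width at $O(k)$. The only difference is in the local gadget --- you introduce an explicit indicator variable $z_t^C$ per node of $T_C$ with implication clauses that you then split, whereas the paper threads a single pending literal $F_{B,C}$ through the decomposition and emits ternary clauses directly at forget and join nodes --- and both realizations give the same $O(k)$ width bound and $\poly(k)|\phi|$ running time after the $2^{O(k)}|\phi|$ decomposition step.
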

\begin{proof}(Sketch)
We use the classic reduction from SAT to 3SAT that cuts big clauses into smaller clauses by introducing new variables. During this reduction we have to take care that the runtime is in fact linear and that we can bound the treewidth appropriately. For the complete proof see Appendix~\ref{app:threeCNF}.
\end{proof}


It is well-known that if the clauses in a formula $\phi$ of incidence treewidth $k$ have at most size $d$, then the primal treewidth of $\phi$ is at most $(k+1)d$, see e.g.~\cite{FischerMR08} so the following result follows directly.

\begin{corollary}\label{cor:twtransfer}
 There is an algorithm that, given a CNF-formula $\phi$ of incidence treewidth $k$, computes in time $O(2^k |\phi|)$ a 3CNF-formula $\phi'$ of primal treewidth $O(k)$ such that $\phi$ is a projection of $\phi'$.
\end{corollary}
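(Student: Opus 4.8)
The plan is to obtain the corollary as an immediate consequence of Proposition~\ref{prop:threeCNF} together with the well-known arity-based bound relating incidence and primal treewidth that is quoted just above the statement.

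First I would run the algorithm of Proposition~\ref{prop:threeCNF} on the input $\phi$. In time $2^{O(k)}|\phi|$ this returns a 3CNF formula $\phi'$ with $\var(\phi)\subseteq\var(\phi')$ such that $\phi$ is a projection of $\phi'$, and with incidence treewidth $k'=O(k)$. The two semantic requirements of the corollary, namely variable containment and the projection relationship between $\phi$ and $\phi'$, are therefore inherited verbatim from the proposition, so nothing about the models of $\phi'$ needs to be re-examined.

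It then remains only to convert the bound on the \emph{incidence} treewidth of $\phi'$ into a bound on its \emph{primal} treewidth. Since $\phi'$ is in 3CNF, each of its clauses has size at most $d=3$. Plugging this into the quoted fact that a formula of incidence treewidth $k'$ with clauses of size at most $d$ has primal treewidth at most $(k'+1)d$, I conclude that the primal treewidth of $\phi'$ is at most $3(k'+1)=O(k)$, which is exactly the guarantee the corollary asks for. The output of the algorithm is simply $\phi'$ itself; the primal-treewidth bound is a structural property established by this argument and need not be computed separately.

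There is no real obstacle here beyond bookkeeping: the whole running time is that of the single call to Proposition~\ref{prop:threeCNF}, and the stated $O(2^k|\phi|)$ should be read as the $2^{O(k)}|\phi|$ bound of that proposition, with the constant in the exponent absorbed into the $O$-notation. Consequently the proof reduces to chaining the proposition with the cited treewidth relation and reading off $d=3$.
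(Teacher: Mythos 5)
Your proposal is correct and matches the paper's own (implicit) argument exactly: the paper derives the corollary directly from Proposition~\ref{prop:threeCNF} together with the quoted fact that incidence treewidth $k'$ and clause size at most $d$ imply primal treewidth at most $(k'+1)d$, instantiated with $d=3$. Your remark that the stated $O(2^k|\phi|)$ should be read as $2^{O(k)}|\phi|$ is also the right reading of the paper's bound.
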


We will in several places in this paper consider Boolean combinations of functions expressed by CNF formulas of bounded treewidth. The following technical lemma states that we can under certain conditions construct CNF formulas of bounded treewidth for the these Boolean combinations.

\begin{lemma}\label{lem:combination}
\begin{itemize}
 \item[a)] There is an algorithm that, given a 3CNF-formula $\phi$ and a tree decomposition $(T,(B_t)_{t\in T})$ of its incidence graph of width $O(k)$, computes in time $\poly(k) n$ a CNF-formula $\phi'$ and a tree decomposition $(T',(B_t')_{t\in T})$ of the incidence graph of $\phi'$ such that $\neg\phi$ is a projection of $\phi'$, for all $t\in T$ we have $B_t'\cap \var(\phi) = B_t$ and the width of $(T',(B_t')_{t\in T})$ is $O(k)$.
 \item[b)] There is an algorithm that, given two 3CNF-formulas $\phi_1, \phi_2$ and two tree decompositions $(T,(B^i_t)_{t\in T})$ for $i = 1,2$ of the incidence graphs of $\phi_i$ of width $O(k)$ such that for every bag either $B_t^1\cap B_t^2 = \emptyset$ or $B_t^1\cap \var(\phi_1) = B_t^2\cap \var(\phi_2)$, computes in time $\poly(k) n$ a tree decomposition $(T',(B_t')_{t\in T})$ of the incidence graph of $\phi_1\land \phi_2$ such that $\phi' \equiv \phi_1\land \phi_2$, for all $t\in T$ we have $B_t^1\cup B_t^2 = B_t'$ and the width of $(T,(B_t')_{t\in T})$ is $O(k)$.
\end{itemize}
\end{lemma}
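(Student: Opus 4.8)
The plan is to handle the two parts separately, using a Tseitin-style encoding with tree-structured auxiliary variables for part a) and a union of the given tree decompositions for part b).

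For part a), observe that $\neg\phi = \bigvee_{i} \neg C_i$, where each clause $C_i = \ell^i_1 \lor \ell^i_2 \lor \ell^i_3$ of the $3$CNF-formula $\phi$ is falsified exactly when all three of its literals are false. I would first introduce, for every clause $C_i$, a fresh indicator variable $y_i$ together with the clauses $\neg y_i \lor \neg \ell^i_j$ for $j \in \{1,2,3\}$, so that $y_i = 1$ forces $C_i$ to be falsified. Placing $y_i$ in exactly those bags that contain the clause-vertex $C_i$ keeps these gadget clauses local: each clause $\neg y_i \lor \neg \ell^i_j$ lives in a bag that already contains both $C_i$ and the variable of $\ell^i_j$ (such a bag exists by the edge condition for the incidence graph), and since a bag contains at most $O(k)$ clause-vertices this increases the width by only a constant factor. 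It then remains to enforce the global disjunction $\bigvee_i y_i$, and doing this within bounded treewidth is the main obstacle, since a single large clause over all $y_i$ would destroy the width bound.

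To overcome this I would first make the decomposition nice (at linear cost and without increasing the width) and root it with an empty root bag, so that every clause-vertex $C_i$ is forgotten at exactly one forget node. For every node $t$ I introduce an accumulator variable $a_t$, placed in $B_t$ and in the bag of its parent, intended to satisfy the invariant $a_t = \bigvee\{\, y_i : C_i \text{ is forgotten in the subtree rooted at } t \,\}$. This invariant is enforced by constant-size gadgets read off the node types: $a_t = 0$ at leaves, $a_t \leftrightarrow a_{t'}$ at introduce nodes and at forget nodes that forget a variable, $a_t \leftrightarrow (a_{t'} \lor y_i)$ at the forget node of a clause $C_i$, and $a_t \leftrightarrow (a_{t_1} \lor a_{t_2})$ at a join node; finally I add the unit clause $a_r$ at the root $r$. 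Since the two subtrees at a join are disjoint no clause is forgotten twice, so a bottom-up induction gives the invariant, whence $a_r = \bigvee_i y_i$. The resulting formula $\phi'$ then has the property that an assignment $a$ to $\var(\phi)$ extends to a model of $\phi'$ if and only if some $y_i$ can be set to $1$, i.e. if and only if some clause $C_i$ is falsified by $a$, which is exactly $a \models \neg\phi$; hence $\neg\phi$ is a projection of $\phi'$. Each bag gains at most $O(k)$ indicator variables and a constant number of accumulators, so the width stays $O(k)$; no original variable changes position, so $B_t' \cap \var(\phi) = B_t$; and the construction uses $\poly(k)$ work per node of a decomposition of linear size, for total time $\poly(k)\, n$.

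For part b) the output formula is simply $\phi_1 \land \phi_2$ (concatenate the two clause lists), and I would take $B_t' = B_t^1 \cup B_t^2$ for every $t \in T$. I then check the three tree-decomposition axioms for the incidence graph of $\phi_1 \land \phi_2$. Vertex coverage and edge coverage are immediate, since every variable, clause and incidence of $\phi_1 \land \phi_2$ already occurs in $\phi_1$ or in $\phi_2$ and is covered by $(B_t^1)$ or $(B_t^2)$. The connectivity condition is trivial for a vertex private to one formula, as it keeps exactly the bags it had, so the crux is connectivity for a variable $v \in \var(\phi_1) \cap \var(\phi_2)$ shared by both formulas. Here the compatibility hypothesis is exactly what is needed: wherever the two bags meet they carry the same variable set ($B_t^1 \cap \var(\phi_1) = B_t^2 \cap \var(\phi_2)$), so a shared variable occupies the same bags in both decompositions and the two subtrees $\{t : v \in B_t^1\}$ and $\{t : v \in B_t^2\}$ coincide, giving a single subtree in $(B_t')$. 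The width is at most $|B_t^1| + |B_t^2| = O(k)$, and forming the unions takes $\poly(k)\, n$ time. Of the two parts, a) is the substantial one, and within it the tree-structured encoding of $\bigvee_i y_i$ is the step requiring the most care; part b) is routine once the shared-variable connectivity has been extracted from the compatibility hypothesis.
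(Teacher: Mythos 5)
Your proposal is correct and follows essentially the same route as the paper: a Tseitin-style encoding of $\neg\phi$ whose auxiliary variables are accumulated along the tree decomposition (you take the disjunction of clause-falsification indicators over forgotten clauses, the paper equivalently negates the conjunction of clause-satisfaction indicators over clauses in the bags --- De Morgan duals of the same construction), and for part b) the identical union $B_t' = B_t^1 \cup B_t^2$, for which you usefully spell out why the compatibility hypothesis gives connectivity for shared variables. The only bookkeeping nit is that at a forget node for a clause $C_i$ the gadget for $a_t \leftrightarrow (a_{t'} \lor y_i)$ needs $a_t$, $a_{t'}$ and $y_i$ in a common bag, which your stated placement does not quite guarantee; adding $y_i$ (or $a_t$) to one more bag fixes this without affecting the width bound.
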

\begin{proof}
 a) Because every clause has at most $3$ literals, we assume w.l.o.g. that every bag $B$ that contains a clause $C$ contains also all variables of $C$.
 
 In a first step, we add for every clause $C= \ell_1\lor \ell_2\lor \ell_3$ a variable $x_C$ and substitute $C$ by clauses with at most $3$-variables encoding the constraint $\mathcal{C} = x_C \leftrightarrow l_1\lor l_2\lor l_3$ introducing some new variables. The result is a CNF-formula $\phi_1$ in which every assignment $a$ to $\var(\phi)$ can be extended uniquely to a satisfying assignment $a_1$ and in $a_1$ the variable $x_C$ is true if and only if $C$ is satisfied by $a$. Note that, since every clause has at most $3$ variables, the clauses for $\mathcal{C}$ can be constructed in constant time. Moreover, we can construct a tree decomposition of width $O(k)$ for $\phi_1$ from that of $\phi$ by adding all new clauses for $\mathcal{C}$ and $x_C$ to every bag containing $C$.
 
 In a next step, we introduce a variable $x_t$ for every $t\in T$ and a constraint $\mathcal{T}$ defining $x_t \leftrightarrow (x_{t_1} \land x_{t_2} \land \bigwedge_{C\in B_t} x_C)$ where $t_1, t_2$ are the children of $t$ and the variables are omitted in case they do not appear. The resulting CNF formula $\phi_2$ is such that every assignment $a$ to $\var(\phi)$ can be uniquely extended to a satisfying assignment $a_2$ of $\phi_2$ and $x_t$ is true in $a_2$ if and only if all clauses that appear in the subtree of $T$ rooted in $t$ are satisfied by $a$. Since every constraint $\mathcal{T}$ has at most $k$ variables, we can construct the 3CNF-formula simulating it in time $O(k)$, e.g.~by Tseitin transformation. We again bound the treewidth as before.
 
 The only thing that remains is to add a clause $\neg x_r$ where $r$ is the root of $T$. This completes the proof of a).
 
 b) We simply set $B_t' = B_t^1 \cup B_t^2$. It is readily checked that this satisfies all conditions.
\end{proof}

\begin{lemma}\label{lem:subset}
 There is an algorithm that, given a 3CNF formula $\phi$ with a tree decomposition $(T, (B_t)_{t\in T})$ of width $k$ of the incidence graph of $\phi$ and sequences of variables $X:=(x_1, \ldots , x_\ell)$, $Y=(y_1, \ldots, y_\ell)\subseteq \var(\phi)^\ell$ such that for every $i \in [\ell]$ there is a bag $B_t$ with $\{x_i, y_i\} \in B_t$, computes in time $\poly(k)|\phi|$ a formula $\psi$ that is a projection of $X\subseteq Y = \bigwedge_{i=1}^\ell (x_i \le y_i)$ and a tree decomposition $(T, (B_t)_{t\in T})$ of $\psi$ of width $O(1)$. The same is true for $\subset$ instead of $\subseteq$.
\end{lemma}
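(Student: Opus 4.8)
The plan is to build $\psi$ explicitly and to obtain its tree decomposition by refining the given one, treating $\subseteq$ first and then bolting a strictness gadget on top for $\subset$. For the $\subseteq$-case I would simply take $\psi_{\subseteq} := \bigwedge_{i=1}^{\ell}(\neg x_i \lor y_i)$. This formula introduces no auxiliary variables, so $\var(\psi_\subseteq)\subseteq\var(\phi)$ and its set of models is exactly $\bigwedge_{i=1}^{\ell}(x_i \le y_i)$; in particular it is trivially a projection of $X\subseteq Y$, and the projection requirement of the statement is met for free. All of the work therefore goes into the tree decomposition, and this is precisely where the hypothesis that each pair $\{x_i,y_i\}$ shares a bag is used.

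For the decomposition I would, for every $i$, fix a node $t_i$ with $\{x_i,y_i\}\subseteq B_{t_i}$ and place the clause-vertex $C_i=(\neg x_i\lor y_i)$ next to these two variables. Naively dumping every $C_i$ into its bag $B_{t_i}$ would blow up the width whenever many constraints land on the same node, so instead I would subdivide each node $t$ into a short path whose internal nodes carry one clause $C_i$ each, the associated bag being just $\{x_i,y_i,C_i\}$. The subtlety — and what I expect to be the main obstacle — is the connectivity condition: a variable $v$ may occur as some $x_i$ or $y_i$ in several constraints placed at different nodes, and it must then occupy a \emph{connected} family of bags. Here the hypothesis pays off, since all nodes using $v$ lie in the connected subtree $T_v=\{t\mid v\in B_t\}$ of the input decomposition, so I can route $v$ along the paths of $T_v$ joining its uses, inserting it into the intermediate subdivision-bags only where needed. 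Keeping every bag down to a constant number of vertices while respecting connectivity is the heart of the argument, and is exactly what the local-pair assumption is meant to buy.

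For $\subset$ I would conjoin to $\psi_{\subseteq}$ a gadget expressing $\bigvee_{i}(\neg x_i\land y_i)$, i.e.\ ``a strict position occurs somewhere'', but without any wide clause. Assuming (as we may) that the decomposition is nice, I would introduce for every node $t$ an auxiliary bit $s_t$ meaning ``a strict position has been seen in the subtree rooted at $t$'', enforced by a constant-size gate $s_t \leftrightarrow \bigl(\ell_t \lor s_{t_1}\lor s_{t_2}\bigr)$, where $s_{t_1},s_{t_2}$ are the at most two children bits and $\ell_t$ aggregates the strict literals $\neg x_i\land y_i$ of the constraints placed at $t$ — itself broken into binary gates along the subdivision so that every clause stays of constant size. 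A final unit clause $s_r$ at the root then forces strictness. Since each $s_t$ is functionally determined by the $x_i,y_i$, every assignment to $\{x_1,\dots,y_\ell\}$ extends uniquely, and the projection onto those variables is precisely $X\subset Y$. These bits and gates add only constantly many vertices per subdivided node, so the width remains $O(1)$.

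Finally, for the running time I would observe that the refined tree has size linear in that of $T$, which we may take to be $O(n)$, and that each node spawns only $\poly(k)$ new variables, clauses and bags, all constructible in $\poly(k)$ time using the constant-time variable-indexing assumption; hence the whole construction runs in $\poly(k)\,|\phi|$. I expect the two genuinely delicate points to be, first, organizing the subdivision and the routing of reused variables so that connectivity holds with only constantly many vertices per bag, and second, threading the single strictness bit through the tree so that the otherwise global disjunction $\bigvee_i(\neg x_i\land y_i)$ is evaluated by constant-size local gates. Both are what let the width stay independent of $k$.
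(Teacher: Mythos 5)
Your $\subseteq$-formula and the overall shape of the $\subset$-gadget are fine, but the proposal founders exactly at the point you yourself flag as ``the heart of the argument'': producing a tree decomposition of width $O(1)$ by subdividing nodes and routing shared variables. This cannot work, because the literal width-$O(1)$ claim is false in general. Take $\phi$ with a single bag $B_t$ of $k+1$ variables and let the pairs $(x_i,y_i)$ range over all $\binom{k+1}{2}$ pairs in that bag (the hypothesis ``each $\{x_i,y_i\}$ shares a bag'' is satisfied). Then the primal graph of $\psi_\subseteq=\bigwedge_i(\neg x_i\lor y_i)$ is the clique $K_{k+1}$, so every tree decomposition of $\psi_\subseteq$ (incidence or primal) has width $\Omega(k)$. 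Concretely, your routing step is where this surfaces: when many constraints placed on the same subdivision path reuse variables that also occur elsewhere in $T$, connectivity forces those variables into the intermediate subdivision bags, and a single bag ends up holding $\Omega(k)$ variables. So the subdivision buys nothing, and the acknowledged obstacle is not a technicality but an impossibility. The ``$O(1)$'' in the statement has to be read as ``$O(k)$, over the \emph{same} tree $T$ with bags aligned to the input ones'' --- that alignment is also what the downstream applications need, since Lemma~\ref{lem:combination}~b) requires the two decompositions being combined to agree bag-by-bag on shared variables, which your subdivided tree no longer does.

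Once the width target is corrected, the paper's own proof is much shorter than what you attempt: for $\subseteq$ it takes $\psi=\bigwedge_i(\neg x_i\lor y_i)$ and simply reuses the given decomposition (adding each clause vertex next to a bag containing its two variables), with no projection and no subdivision. For $\subset$ it does not build a bespoke strictness gadget: it writes $X=Y$ as $\bigwedge_i(\neg x_i\lor y_i)\land(x_i\lor\neg y_i)$, applies Lemma~\ref{lem:combination}~a) to obtain a bounded-treewidth formula having $X\ne Y$ as a projection, and conjoins with $X\subseteq Y$ via Lemma~\ref{lem:combination}~b). Your threaded bits $s_t\leftrightarrow(\ell_t\lor s_{t_1}\lor s_{t_2})$ with a unit clause at the root are essentially a hand-rolled special case of the construction inside Lemma~\ref{lem:combination}~a), and they do work if you attach one bit per node of the \emph{original} tree and Tseitin-split the local aggregate $\ell_t$ (width $O(k)$, not $O(1)$); but as written, built on top of the subdivided decomposition, the argument inherits the gap above.
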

\begin{proof}
 For the case $\subseteq$, $\psi$ is simply $\bigwedge_{i=1}^\ell (x_i \le y_i)= \bigwedge_{i=1}^\ell \neg x_i \lor y_i$. $\psi$ satisfies all properties even without projection and with the same tree decomposition.
 
 The case $\subset$ is slightly more complex. We first construct $\bigwedge_{i=1}^\ell (x_i = y_i) = \bigwedge_{i=1}^\ell (\neg x_i \lor y_i) \land (x_i \lor \neg y_i)$. Then we apply Lemma~\ref{lem:combination} a) to get a CNF formula that has $X\ne Y$ as a projection. Finally, we use Lemma~\ref{lem:combination} to get a formula for $X\subset Y = (X\subseteq Y) \land (X\ne Y)$. It is easy to check that this formula has the right properties for the tree decomposition.
\end{proof}

\section{$2$-QBF} 

Our main tool in this paper will be QBF, the quantified version of CNF. 
In particular, we will be concerned with the version of QBF which only has two quantifier blocks which is often called $2$-QBF. Let us recall some standard definitions. A $\forall \exists$-QBF 
is a formula of the form $\forall X \exists Y \phi(X, Y)$ where $X$ and $Y$ are disjoint vectors of variables and $\phi(X, Y)$ is a CNF-formula called the \emph{matrix}. We assume the usual semantics for $\forall \exists$-QBF. Moreover, we sometimes consider Boolean combinations of QBF-formulas which we assume to be turned into prenex form again with the help of the usual transformations.

It is well-known that deciding if a given $\forall \exists$-QBF is true is complete for the second level of the polynomial hierarchy,  and thus generally considered intractable. Treewidth has been used as an approach for finding tractable fragments of $\forall \exists$-QBF and more generally bounded alternation QBF. Let us define the primal (resp.\ insidence) treewidth of a $\forall \exists$-QBF to be the primal (resp.\ incidence) treewidth of the underlying CNF formula. 
Chen~\cite{Chen04} showed the following result.
\begin{theorem}\cite{Chen04}\label{thm:hubie}
There is an algorithm that given a $\forall\exists$-QBF of primal treewidth $k$ decides in time $2^{2^{O(k)}}|\phi|$ if $\phi$ is true.
\end{theorem}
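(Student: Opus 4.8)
The plan is to prove this by a bottom-up dynamic programming over a tree decomposition of the primal graph. First I would compute, using Bodlaender's algorithm (linear time for fixed $k$), a width-$k$ tree decomposition of the primal graph of the matrix, and turn it into a nice tree decomposition with an empty root bag. Each clause $C$ can be assigned to a single fixed node $t_C$ whose bag contains $\var(C)$; such a node exists because the variables of a clause form a clique in the primal graph and every clique is contained in some bag. For a node $t$, let $V_t$ be the set of variables introduced in the subtree rooted at $t$. I split the boundary $B_t$ into its universal part $U_t = X\cap B_t$ and existential part $E_t = Y\cap B_t$, and write $\Gamma_t$ (resp.\ $\Delta_t$) for the forgotten universal (resp.\ existential) variables of $V_t$. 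The clauses \emph{owned} by $t$ depend only on the boundary assignment, since all their variables lie in $B_t$.

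The key difficulty, and what makes the correct state double-exponential, is the quantifier order. The naive invariant—tracking for each $\alpha\in\{0,1\}^{U_t}$ the single set of boundary existential assignments $\beta$ that admit a completion of the inside clauses—is \emph{wrong}, because projecting out a forgotten existential at a forget node would realize the order $\exists\beta\,\forall\Gamma_t$, whereas $\forall X\exists Y$ demands that an existential variable may depend on universals forgotten earlier in the subtree. To fix this I would track, for each $\alpha\in\{0,1\}^{U_t}$ and each assignment $\gamma\in\{0,1\}^{\Gamma_t}$ to the forgotten universals, the SAT-feasible set $G_t(\alpha,\gamma) = \{\beta\in\{0,1\}^{E_t} : \exists\,\delta\in\{0,1\}^{\Delta_t}\ (\alpha,\gamma,\beta,\delta)\text{ satisfies all clauses owned in }V_t\}$, and store only the collection of \emph{distinct} such sets, $\mathcal{G}_t(\alpha) = \{\,G_t(\alpha,\gamma) : \gamma\in\{0,1\}^{\Gamma_t}\,\}\subseteq \mathcal{P}(\{0,1\}^{E_t})$. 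This ``set of sets'' is the right sufficient statistic: two forgotten-universal assignments imposing the same feasible set on the boundary are interchangeable for everything processed above $t$.

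With this state the recurrences are natural. At an introduce node I extend or filter each member of $\mathcal{G}_{t'}(\alpha)$ according to the owned clauses. At a forget node for an existential $v$ I project $v$ out \emph{within each} member set, which correctly realizes $\forall\Gamma_t\,\exists v$ because the $\gamma$-dependence is kept explicit. At a forget node for a universal $v$ I take the union $\mathcal{G}_{t'}(\alpha,v{=}0)\cup\mathcal{G}_{t'}(\alpha,v{=}1)$, encoding that $v$ joins the forgotten universals and all its values must be handled. At a join node, $\Gamma_t$ and $\Delta_t$ split disjointly between the two subtrees, and by the connectivity condition no clause and no inside variable is shared except through the boundary; hence for independently ranging $\gamma_1,\gamma_2$ the feasible set factors as an intersection, giving $\mathcal{G}_t(\alpha) = \{\,g_1\cap g_2 : g_1\in\mathcal{G}_{t_1}(\alpha),\ g_2\in\mathcal{G}_{t_2}(\alpha)\,\}$. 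At the empty root bag, $G_r(\emptyset,\gamma)$ is either $\emptyset$ or $\{\emptyset\}$, and $\phi$ is true iff $\emptyset\notin\mathcal{G}_r(\emptyset)$, i.e.\ no universal assignment leaves the matrix unsatisfiable. Correctness follows by induction on the tree, where the main things to verify are the order-sensitive forget steps and the independence factorization at joins.

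For the runtime, each $\mathcal{G}_t(\alpha)$ is a set of at most $2^{2^{|E_t|}}\le 2^{2^k}$ distinct subsets, each describable by a bitmask of length $2^k$, so the state at a node has size $2^{k}\cdot 2^{2^{k}}\cdot 2^{k} = 2^{2^{O(k)}}$; every transition is computable in time polynomial in these state sizes, hence $2^{2^{O(k)}}$. Since the nice tree decomposition has $O(k\lvert\phi\rvert)$ nodes and the $\poly(k)$ factors are absorbed into the double exponential, the total running time is $2^{2^{O(k)}}\lvert\phi\rvert$. The main obstacle is entirely conceptual rather than computational: recognizing that the alternation $\forall X\exists Y$ forbids the cheap single-set invariant and forces the set-of-feasible-sets state, which is precisely what produces the double-exponential dependence on $k$.
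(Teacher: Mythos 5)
Your proposal is correct, but it takes a genuinely different route from the paper. The paper does not reprove Chen's result at all: it cites \cite{Chen04} as a black box and only supplies (in Appendix~\ref{app:hubie}) a sketch of why Chen's algorithm --- which manipulates ``choice constraints'' (trees of relations that are repeatedly joined and normalized along the tree decomposition) --- actually runs in \emph{linear} time in $|\phi|$, with the double-exponential bound coming from counting non-equivalent normalized choice constraints. You instead give a direct, self-contained dynamic program whose state at a node is, for each assignment $\alpha$ to the universal boundary variables, the family $\mathcal{G}_t(\alpha)$ of distinct feasible sets of boundary existential assignments indexed by the forgotten universal assignments. Your invariant is sound: since the final test at the empty root bag is exactly ``for every universal assignment there exists an existential assignment satisfying all clauses,'' there is no hidden Skolem-locality issue, and each of your recurrences (projection inside each member set at existential forgets, union of families at universal forgets, pairwise intersection at joins, which is justified by the connectivity condition) depends only on the families $\mathcal{G}$, not on the full map $\gamma\mapsto G_t(\alpha,\gamma)$; the $2^{k}\cdot 2^{2^{k}}\cdot 2^{k}$ state bound and the $O(|\phi|)$-size nice decomposition then give $2^{2^{O(k)}}|\phi|$. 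What each approach buys: the paper's route inherits for free Chen's general $r$-QBF result with an $r$-fold exponential tower (which the paper actually needs for Theorem~\ref{thm:prefabduct1}), at the cost of relying on external machinery; your set-of-sets DP is more elementary and makes the source of the double exponential transparent, though extending it to $r$ alternations would require nesting the power-set construction $r-1$ times, essentially re-deriving Chen's choice-constraint hierarchy. The only details left implicit in your write-up --- assigning each clause to a bag containing its variables in overall linear time, and the (harmless) choice of where clauses are ``owned'' relative to forget nodes --- are routine.
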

We note that the result of~\cite{Chen04} is in fact more general than what we state here. In particular, the paper gives a more general algorithm for $i$-QBF with running time $2^{2^{{\cdot^{\cdot^{\cdot}}}^{O(k)}}} |\phi|$, where the height of the tower of exponentials is $i$. For readability we will restrict ourselves to the case $i=2$ that is general enough for our needs. We also remark that Chen does not state that his algorithm in fact works in linear time. We sketch in Appendix~\ref{app:hubie} why there is indeed a linear runtime bound.

In the later parts of this paper, we require a version of Theorem~\ref{thm:hubie} for incidence treewidth which fortunately follows directly from Theorem~\ref{thm:hubie} and Corollary~\ref{cor:twtransfer}.
\begin{corollary}\label{cor:hubie}
There is an algorithm that given a $\forall\exists$-QBF of incidence treewidth $k$ decides in time $2^{2^{O(k)}} |\phi|$ if $\phi$ is true.
\end{corollary}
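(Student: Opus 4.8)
The plan is to derive Corollary~\ref{cor:hubie} directly by composing the incidence-to-primal treewidth transfer of Corollary~\ref{cor:twtransfer} with Chen's primal-treewidth algorithm of Theorem~\ref{thm:hubie}. The key point is that Corollary~\ref{cor:twtransfer} only speaks about CNF formulas, while here we have a $\forall\exists$-QBF $\forall X\exists Y\,\phi(X,Y)$. So the first thing I would do is observe that the treewidth notions for a QBF were defined to be exactly those of the underlying matrix $\phi$, and that the transformation of Corollary~\ref{cor:twtransfer} introduces only fresh auxiliary variables which appear in no quantifier block of the original formula. These new variables should be bound existentially, innermost, so that they are chosen after both $X$ and $Y$.

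Concretely, given a $\forall\exists$-QBF $\Phi = \forall X\exists Y\,\phi(X,Y)$ of incidence treewidth $k$, I would apply Corollary~\ref{cor:twtransfer} to the matrix $\phi$ to obtain in time $O(2^k|\phi|)$ a 3CNF formula $\phi'$ of primal treewidth $O(k)$ such that $\phi$ is a projection of $\phi'$; let $Z = \var(\phi')\setminus\var(\phi)$ be the freshly introduced variables. I then form the new QBF
\[
\Phi' = \forall X\,\exists Y\,\exists Z\;\phi'(X,Y,Z).
\]
The second step is the correctness argument, namely that $\Phi'$ is true if and only if $\Phi$ is. This is where the projection property does the work: for every fixed assignment $a$ to $\var(\phi)$, the definition of projection says that $a$ satisfies $\phi$ exactly when $a$ extends to a model of $\phi'$, i.e.\ exactly when there is a choice of the $Z$-variables making $\phi'$ true. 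Hence for each $X$-assignment, ``$\exists Y\,\phi$'' and ``$\exists Y\exists Z\,\phi'$'' have the same truth value, and the outer universal quantifier preserves the equivalence.

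The final step is to bound the parameters of $\Phi'$ and invoke Theorem~\ref{thm:hubie}. Since $\Phi'$ is again a $\forall\exists$-QBF (the innermost $\exists Z$ block merges into the existing existential block) and its matrix $\phi'$ has primal treewidth $O(k)$, Theorem~\ref{thm:hubie} decides it in time $2^{2^{O(k)}}|\phi'|$. Because the transformation runs in time $O(2^k|\phi|)$, we have $|\phi'| = O(2^k|\phi|)$, and the extra $2^k$ factor is absorbed into the double-exponential dependence, leaving an overall bound of $2^{2^{O(k)}}|\phi|$. I do not expect any genuine obstacle here; the only point requiring a little care is the bookkeeping that the auxiliary variables $Z$ are placed in the \emph{inner} (existential) block rather than the universal one, since placing them under $\forall$ would break the equivalence. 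Everything else is a routine composition of the two cited results.
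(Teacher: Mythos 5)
Your proposal is correct and follows exactly the route the paper intends: the paper gives no explicit proof, stating only that the corollary ``follows directly'' from Theorem~\ref{thm:hubie} and Corollary~\ref{cor:twtransfer}, and your argument fills in precisely those details, including the one genuinely delicate point that the fresh projection variables must be placed in the innermost existential block.
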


We remark that general QBF of bounded treewidth without any restriction on the quantifier prefix is $\mathsf{PSPACE}$-complete~\cite{AtseriasO14}, and finding tractable fragments by taking into account the structure of the prefix and notions similar to treewidth is quite an active area of research, see e.g.~\cite{abs-1711-02120,EibenGO16}.

To show tightness of our upper bounds, we use the following theorem from~\cite{LampisM17}.

\begin{theorem}\label{thm:lampisM}
 There is no algorithm that, given a $\forall\exists$-QBF $\phi$ with $n$ variables and primal treewidth $k$, decides if $\phi$ is true in time $2^{2^{o(k)}} 2^{o(n)}$, unless ETH is false.
\end{theorem}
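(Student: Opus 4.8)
The plan is to derive the lower bound from the Exponential Time Hypothesis by a linear-time reduction from $3$SAT. Since a $\forall\exists$-QBF is a $\Pi_2$-object while $3$SAT is existential, I would encode \emph{unsatisfiability}: starting from a $3$CNF formula $\psi$ on $N$ variables $z_1,\dots,z_N$, I would build a QBF $\forall z_1\cdots z_N\,\exists W\,\phi$ that is true if and only if \emph{every} assignment of $z_1,\dots,z_N$ falsifies some clause of $\psi$, i.e.\ iff $\psi$ is unsatisfiable. The universal block ranges over all candidate assignments, and the existential block $W$ together with $\phi$ has the single job of \emph{certifying a falsified clause} for the assignment currently fixed by the universal player; because we want the existential answer to depend on the challenge, the quantifier order $\forall\exists$ is exactly right here. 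As ETH equivalently forbids a $2^{o(N)}$ algorithm for deciding unsatisfiability of $3$CNF formulas (the complement of $3$SAT), a fast algorithm for the resulting QBF would refute ETH. By the Sparsification Lemma I may assume $\psi$ has $M=O(N)$ clauses.

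The heart of the construction is to realise this certification with primal treewidth $O(\log N)$ and only $O(N)$ variables. The assignment itself contributes the $N$ universal variables, which is exactly the budget we can afford. The existential player guesses the index $j\in\{0,1\}^{\lceil\log M\rceil}$ of a clause (only $O(\log N)$ bits), and $\phi$ must then read the values the universal player assigned to the (at most three) variables occurring in $C_j$ and check that all three literals are false. The key gadget is a \emph{multiplexer}: a selection circuit of logarithmic depth that, given the $O(\log N)$ address bits and the array $z_1,\dots,z_N$, outputs the selected value. Such a circuit has a tree-like shape, and I would obtain a tree decomposition of width $O(\log N)$ by placing the $O(\log N)$ address bits into \emph{every} bag and sweeping along the circuit, so that each bag additionally contains only a constant number of working variables. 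This is the mechanism that converts the unbounded treewidth of $\psi$ (its clause--variable incidence) into the logarithmic treewidth of the QBF, at the price of one quantifier alternation.

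Assembling the pieces, the reduction runs in linear time and outputs a $\forall\exists$-QBF with primal treewidth $k=O(\log N)$ and $n=O(N)$ variables. Plugging these into a hypothetical algorithm of running time $2^{2^{o(k)}}2^{o(n)}$ gives, since $2^{2^{o(\log N)}}=2^{N^{o(1)}}=2^{o(N)}$ and $2^{o(n)}=2^{o(N)}$, an algorithm deciding unsatisfiability of $\psi$ in time $2^{o(N)}$, contradicting ETH. The main obstacle I anticipate is engineering the selection-and-verification gadget so that it \emph{simultaneously} has treewidth $O(\log N)$ and total size $O(N)$: recovering a variable's value from a clause index is precisely the incidence relation whose treewidth is unbounded, so it must be serialised through the multiplexer, and a naive comparison-based lookup spends $\Theta(\log N)$ work per clause and inflates the variable count to $\Theta(N\log N)$. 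Keeping the size strictly linear is exactly what is needed to exclude the $2^{o(n)}$ factor (rather than merely a $2^{2^{o(k)}}\poly(n)$ bound), and this careful gadget design is the technical crux of the argument.
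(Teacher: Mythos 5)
First, a point of reference: the paper does not prove this statement at all --- Theorem~\ref{thm:lampisM} is imported verbatim from~\cite{LampisM17}, so there is no in-paper proof to compare against. Your plan does go in the same general direction as the proof in that reference: compress the hard instance through $O(\log N)$ index bits, pay for the compression with one quantifier alternation, invoke sparsification to get $M=O(N)$ clauses, and close with the arithmetic $2^{2^{o(\log N)}}\cdot 2^{o(N)}=2^{o(N)}$. Framing the reduction via unsatisfiability is also legitimate, since ETH equally rules out $2^{o(N)}$ deterministic algorithms for the complement of 3SAT, and you correctly identify that the variable count of the produced QBF must be $O(N)$ (a $\Theta(N\log N)$-variable gadget would void the $2^{o(n)}$ factor).

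However, there is a genuine gap exactly where you flag ``the technical crux,'' and it is not mere bookkeeping --- it is the entire content of the cited theorem. In your setup the $N$ assignment variables $z_1,\dots,z_N$ sit in the universal block and must be wired, in the primal graph of the matrix, to whatever gadget certifies that the selected clause is falsified; every natural way of doing this re-imports the unbounded treewidth of $\psi$. If any single clause of the matrix (or any connected constant-size gadget) touches all three literals of $C_i$, the union over $i$ contains the primal graph of $\psi$ as a subgraph. If instead you introduce a witness $w_i$ per clause with $w_i\rightarrow(\neg\ell^i_1\wedge\neg\ell^i_2\wedge\neg\ell^i_3)$ and an OR-chain over the $w_i$, the primal graph contains the incidence graph of $\psi$ as a subgraph, which is again unbounded. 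Even the multiplexer as you describe it fails: a logarithmic-depth selection tree whose $i$-th leaf reads the variable in position $r$ of clause $i$ yields, after contracting each $z$-hub into one of its leaves, the co-occurrence pattern of $\psi$ superimposed on the tree, and for expander-like $\psi$ (even with bounded occurrence) this has polynomial treewidth. Repairing this requires a further idea, e.g., reordering the leaves of each selector so that the occurrences of each variable are contiguous (which tames the hub--leaf interaction) and then enforcing with additional existential machinery of treewidth $O(\log N)$ and $O(N)$ variables that the three independently addressed occurrences belong to the same clause --- or, closer to what~\cite{LampisM17} actually does, moving the assignment out of explicit universal variables and into the existential player's strategy. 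As written, the proposal assumes the hard part of the theorem rather than proving it; I would either supply the full gadget with its tree decomposition, or simply cite~\cite{LampisM17} as the paper does.
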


\section{Abstract Argumentation}\label{sec:argumentation}

Abstract argumentation is an area of artificial intelligence which tries to assess the acceptability of arguments within a set of possible arguments based only the relation between them, i.e., which arguments defeat which. Since its creation in~\cite{Dung95}, abstract argumentation has developed into a major and very active subfield. In this section, we consider the most studied setting introduced in~\cite{Dung95}.

An argumentation framework is a pair $F=(A, R)$ where $A$ is a finite set and $R\subseteq A\times A$. The elements of $A$ are called \emph{arguments}. The elements of $R$ are called the \emph{attacks} between the arguments and we say for $a,b\in A$ that $a$ attacks $b$ if and only if $ab\in R$. 
A set $S\subseteq A$ is called \emph{conflict-free} if and only if there are no $a,b\in S$ such that $ab\in R$. We say that a vertex $a$ is \emph{defended} by $S$ if for every $b$ that attacks $a$, i.e. $ba\in R$, there is an argument $c\in S$ that attacks $b$. The set $S$ is called \emph{admissible} if and only if it is conflict-free and all elements of $S$ are defended by $S$. An admissible set $S$ is called \emph{preferred} if and only if it is subset-maximal in the set of all admissible sets.

There are two main notions of acceptance: A set $S$ of arguments is accepted \emph{credulously} if and only if there is a preferred admissible set such that $S\subseteq S'$. The set $S$ is accepted \emph{skeptically} if and only if for all preferred admissible sets $S'$ we have $S\subseteq S'$. Both notions of acceptance have been studied extensively in particular with the following complexity results: it is $\mathsf{NP}$ hard to decide, given an argumentation framework $F=(A,R)$ and a set $S\subseteq A$, if $S$ is credulously accepted. For skeptical acceptance, the analogous decision problem is $\Pi_2^p$-complete~\cite{DunneB02}. Credulous acceptance is easier to decide, because when $S$ is contained in any admissible set $S'$ then it is also contained in a preferred admissible set $S''$: a simple greedy algorithm that adds arguments to $S'$ that are not in any conflicts constructs such an $S''$.

Concerning treewidth, after some results using Courcelle's Theorem~\cite{Dunne07}, it was shown in~\cite{DvorakPW12} by dynamic programming that credulous acceptance can be decided in time $2^{O(k)} n$ while skeptical acceptance can be decided in time $2^{2^{O(k)}}n$ for argument frameworks of size n and treewidth $k$.
Here an argument framework is seen as a directed graph and the treewidth is that of the underlying undirected graph. We reprove these results in our setting. To this end, we first encode conflict free sets in CNF. Given an argumentation framework $F=(A,R)$, construct a CNF formula $\phi_{cf}$ that has an indicator variable $x_a$ for every $a\in A$ as 
\[\phi_{cf} := \bigwedge_{ab\in R} \neg x_a \lor \neg x_b.\]
It is easy to see that the satisfying assignments of $\phi_{cf}$ encode the conflict-free sets for $F$. To encode the admissible sets, we add an additional variable $P_a$ for every $a\in A$ and define:
\[\phi_{d} := \phi_{cf} \land \bigwedge_{a\in A} ((\neg P_a \lor \bigvee_{b:ba\in R} x_b) \land \bigwedge_{b:ba\in R} (P_a\lor \neg x_b))\]
The clauses for each $P_a$ are equivalent to $P_a \leftrightarrow \bigvee_{b:ba \in R} x_b$, i.e., $P_a$ is true in a model if and only if $a$ is attacked by the encoded set. Thus by setting 
\[\phi_{adm} := \phi_d \land \bigwedge_{ba \in R} (\neg P_b \lor \neg x_a)\]
we get a CNF formula whose models restricted to the $x_a$ variables are exactly the admissible sets. We remark that in~\cite{LagniezLM15} the authors give a similar SAT-encoding for argumentation problems with slightly different semantics.
\begin{clm}\label{clm:argumentation}
 If $F$ has treewidth $k$, then $\phi_{adm}$ has incidence treewidth $O(k)$.
\end{clm}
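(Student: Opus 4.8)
The plan is to build a tree decomposition of the incidence graph of $\phi_{adm}$ directly from one of $F$. I would fix a tree decomposition $(T,(B_t)_{t\in T})$ of the underlying undirected graph of $F$ of width $k$. The incidence graph of $\phi_{adm}$ has variable-vertices $x_a,P_a$ for $a\in A$ together with one clause-vertex per clause. The first step is to replace every argument by its two variables, setting $B_t':=\{x_a,P_a \mid a\in B_t\}$; this has size at most $2(k+1)$ and already places all variable-vertices correctly (the connectivity condition for $x_a$ and $P_a$ is inherited from that of $a$).

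The main obstacle is that the clauses $C_a := \neg P_a \lor \bigvee_{b:ba\in R} x_b$ have \emph{unbounded} arity — the arity equals the in-degree of $a$ — so in the primal graph all in-neighbours of $a$ would form a clique and the primal treewidth could be far larger than $k$. This is precisely why incidence treewidth is the right parameter: the clause-vertex of $C_a$ only needs to share a bag with each of its variables separately, not with all of them at once. I would therefore add the vertex $C_a$ to every bag $B_t'$ with $a\in B_t$. Since $\{t \mid a\in B_t\}$ induces a subtree, the connectivity condition for $C_a$ holds; the edge $C_a P_a$ is covered wherever $a$ appears, and each edge $C_a x_b$ is covered because $ba\in R$ forces $a$ and $b$ (hence $x_b$) into a common bag of that subtree. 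This adds at most $k+1$ clause-vertices per bag, keeping the width $O(k)$.

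All remaining clauses — those of $\phi_{cf}$, the clauses $P_a\lor \neg x_b$, and the clauses $\neg P_b\lor \neg x_a$ — have arity two, and in each case the two variables stem from a pair of arguments adjacent in $F$, so by the edge property they occur together in some bag $B_t'$. Here I would be careful \emph{not} to dump these clause-vertices into the existing bags, since a single bag can be incident to $\Theta(k^2)$ edges of $F$ and the width would then blow up to $O(k^2)$. Instead, for each such clause I would attach a fresh leaf node to a node $t$ whose bag contains both of its variables, with the new bag consisting of the clause-vertex together with those two variables. The new leaf is adjacent to $t$, so the connectivity condition for the two variables is preserved, each new bag has size $3$, and no existing bag grows.

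Combining the three steps yields a tree decomposition of the incidence graph of $\phi_{adm}$ of width $O(k)$, which proves the claim. The only points needing verification are the connectivity condition after the two modifications and that every variable–clause edge is covered; both reduce to the edge property of the original decomposition of $F$, namely that adjacent arguments always share a bag.
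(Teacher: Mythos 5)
Your proposal is correct and follows essentially the same route as the paper's own proof: both start from a width-$k$ decomposition of $F$, place $x_a$ and $P_a$ (and the single long clause $C_a$) in every bag containing $a$, and cover each arity-two clause by attaching a fresh leaf bag containing just that clause vertex and its two variables. Your explicit remark about why the binary clause vertices must go into new leaves rather than existing bags is a nice clarification, but the underlying construction is the same.
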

\begin{proof}
 We start from a tree decomposition~$(T,(B_t)_{t\in T})$ of width $k$ of $F$ and construct a tree decomposition of $\phi_{adm}$. First note that $(T,(B_t)_{t\in T})$ is also a tree decomposition of the primal graph of $\phi_{cf}$ up to renaming each $a$ to $x_a$. For every $ba\in R$ there is thus a bag $B$ that contains both $b$ and $a$. We connect a new leaf to $B$ containing $\{C_{a,b},a,b\}$ where $C_{a,b}$ is a clause node for the clause $\neg x_a\lor \neg x_b$ to construct a tree decomposition of the primal graph of $\phi_{d}$.
 
 Now we add $P_a$ to all bags containing $x_a$, so that for every clause $P_a\lor \neg x_b$ we have a bag containing both variables, and we add new leaves for the corresponding clause nodes as before. Then we add for every clause $C_a := \neg P_a \lor \bigvee_{b:ba\in R} x_b$ the node $C_a$ to every bag containing $a$. This covers all edges incident to $C_a$ in the incidence graph of $\phi_d$ and since for every $a$ we only have one such edge, this only increases the width of the decomposition by a constant factor. We obtain a tree decomposition of width $O(k)$ for the incidence graph of $\phi_d$.
 
 The additional edges for $\phi_{adm}$ are treated similarly to above and we get a tree decomposition of width $O(k)$ of $\phi_{adm}$ of $\phi$ as desired.
\end{proof}

Combining Claim~\ref{clm:argumentation} with the fact that satisfiability of CNF-formulas of incidence treewidth $k$ can be solved in time $2^{O(k)}$, see e.g.~\cite{SamerS10}, we directly get the first result of~\cite{DvorakPW12}.

\begin{theorem}
 There is an algorithm that, given an argumentation framework $F=(A,R)$ of treewidth $k$ and a set $S\subseteq A$, decides in time $2^{O(k)}|A|$ if $S$ is credulously accepted.
\end{theorem}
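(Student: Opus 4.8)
The plan is to reduce credulous acceptance to the satisfiability of a CNF formula of bounded incidence treewidth and then invoke a bounded-treewidth SAT solver. The crucial starting point is the observation already recorded above: $S$ is credulously accepted if and only if $S$ is contained in \emph{some} admissible set, since any admissible superset of $S$ can be greedily extended to a preferred one. Hence it suffices to decide whether there exists an admissible set $S'$ with $S\subseteq S'$.

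To this end I would take the formula $\phi_{adm}$ constructed above, whose models restricted to the variables $x_a$ are exactly the admissible sets, and conjoin it with the unit clauses $\bigwedge_{a\in S} x_a$ forcing every argument of $S$ into the encoded set. Call the resulting formula $\phi_{adm}^S$. By construction $\phi_{adm}^S$ is satisfiable if and only if there is an admissible set containing $S$, which by the observation above is exactly the condition that $S$ is credulously accepted.

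It then remains to bound the incidence treewidth of $\phi_{adm}^S$ and supply a suitable tree decomposition. First I would compute a tree decomposition of $F$ of width $O(k)$ in linear time via Bodlaender's algorithm, then apply the construction in the proof of Claim~\ref{clm:argumentation} to obtain a tree decomposition of the incidence graph of $\phi_{adm}$ of width $O(k)$. Each added unit clause $x_a$ is a clause node adjacent only to $x_a$; since $x_a$ already occurs in some bag, we attach a fresh leaf containing $x_a$ together with this clause node, which raises the width by at most a constant factor. Thus $\phi_{adm}^S$ has incidence treewidth $O(k)$ and comes equipped with a width-$O(k)$ decomposition.

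Finally, since a graph of treewidth $k$ on $|A|$ vertices has $O(k|A|)$ edges, we have $|R|=O(k|A|)$ and therefore $|\phi_{adm}^S|=O(k|A|)$. Feeding $\phi_{adm}^S$ and its decomposition into the incidence-treewidth satisfiability algorithm of~\cite{SamerS10}, which runs in time $2^{O(k)}$ times the formula size, yields total running time $2^{O(k)}\cdot O(k|A|)=2^{O(k)}|A|$, as claimed. I do not expect a genuine obstacle: the only points requiring care are verifying that the unit clauses do not blow up the treewidth (handled by the leaf attachment) and confirming that the formula size is linear in $|A|$ for fixed $k$ (guaranteed by the edge bound for bounded-treewidth graphs), both of which are routine.
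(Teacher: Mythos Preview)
Your proposal is correct and follows essentially the same approach as the paper: reduce credulous acceptance to satisfiability of $\phi_{adm}$ (with the arguments of $S$ forced to true), invoke Claim~\ref{clm:argumentation} to bound the incidence treewidth by $O(k)$, and apply the bounded-treewidth SAT algorithm of~\cite{SamerS10}. The paper's proof is a one-line reference to this combination, and you have simply spelled out the routine details (adding unit clauses, attaching them as leaves in the decomposition, and bounding the formula size via the $O(k|A|)$ edge bound) that the paper leaves implicit.
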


We also give a short reproof of the second result of~\cite{DvorakPW12}.
\begin{theorem}\label{thm:argumentation}
  There is an algorithm that, given an argumentation framework $F=(A,R)$ of treewidth $k$ and a set $S\subseteq A$, decides in time $2^{2^{O(k)}}|A|$ if $S$ is skeptically accepted.
\end{theorem}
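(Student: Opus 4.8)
The plan is to reduce skeptical acceptance to a $\forall\exists$-QBF of incidence treewidth $O(k)$ and then invoke Corollary~\ref{cor:hubie}. Recall that $S$ is skeptically accepted if and only if every preferred admissible set $S'$ satisfies $S\subseteq S'$. The natural first step is to rephrase this in a form amenable to a two-quantifier-block formula: $S$ is \emph{not} skeptically accepted exactly when there exists a preferred admissible set $S'$ with $S\not\subseteq S'$, i.e. some $a\in S$ is missing from $S'$. Since preferred sets are the subset-maximal admissible sets, ``$S'$ is preferred'' unfolds to ``$S'$ is admissible and no admissible $S''\supsetneq S'$ exists'', which already exhibits a $\exists\forall$ pattern for non-acceptance, hence a $\forall\exists$ pattern for acceptance. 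So I would aim to build a formula whose outer universal block ranges over the $x_a$ variables encoding a candidate set $S'$ and whose inner existential block certifies that $S'$ can be strictly enlarged while staying admissible, or that $S'$ itself violates admissibility or the containment of $S$.

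Concretely, I would reuse $\phi_{adm}$ from the preceding construction, which already has incidence treewidth $O(k)$ by Claim~\ref{clm:argumentation} and whose $x_a$-projected models are exactly the admissible sets. Writing the $x_a$ variables as a vector $X$, I want the QBF to express that for \emph{all} assignments $X$, if $X$ encodes an admissible set containing $S$, then there exists a strictly larger admissible set; equivalently, every admissible superset of $S$ is non-maximal, which contradicts the existence of a preferred admissible set containing $S$ unless $S$ is contained in all of them. The cleanest route is to let the universal block quantify a candidate preferred set $X$ and the existential block guess a witness $X'$ together with the auxiliary $P_a$ variables for both copies; the matrix then asserts the implication ``$X$ is admissible and $S\subseteq X$'' $\Rightarrow$ ``$X'$ is admissible and $X\subsetneq X'$'', so that if the QBF is true then no admissible set containing $S$ is maximal among admissible sets, i.e. $S$ is skeptically accepted. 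I would encode $S\subseteq X$ and $X\subsetneq X'$ using Lemma~\ref{lem:subset}, the admissibility of each copy using a fresh instance of $\phi_{adm}$, and combine these with the negations and conjunctions supplied by Lemma~\ref{lem:combination}, which is exactly what keeps the incidence treewidth at $O(k)$.

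The key technical steps, in order, are: (i) fix the logical normal form for skeptical acceptance as a $\forall\exists$ statement; (ii) instantiate two treewidth-compatible copies of $\phi_{adm}$ over disjoint variable sets, sharing a common tree decomposition skeleton so that the compatibility hypothesis of Lemma~\ref{lem:combination}~b) is met; (iii) add the subset and strict-subset gadgets of Lemma~\ref{lem:subset} linking $S$ to the first copy and the first copy to the second; (iv) assemble the implication in the matrix via the negation and conjunction operations of Lemma~\ref{lem:combination}, noting that the inner existential variables (the $P_a$ of both copies plus the auxiliary Tseitin variables) fall into the $\exists$ block while $X$ stays universal; and (v) read off the running time as $2^{2^{O(k)}}|A|$ from Corollary~\ref{cor:hubie}, since the whole construction is linear-time and treewidth-preserving.

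The main obstacle I anticipate is the bookkeeping of quantifier placement: Lemma~\ref{lem:combination}~a) introduces fresh auxiliary variables when it negates a formula, and I must ensure that all such variables, together with the $P_a$ variables certifying admissibility of the \emph{inner} witness $X'$, land inside the existential block, whereas the universally quantified $X$ must genuinely remain the only outer variables. Because the outer formula ``$X$ is admissible and $S\subseteq X$'' appears on the hypothesis side of an implication, its own $P_a$ variables are awkward---they are functionally determined by $X$ but syntactically existential. I expect to resolve this either by observing that these $P_a$ are uniquely determined and can be safely existentially quantified inside without changing the truth value (since for each fixed $X$ exactly one extension satisfies the defining clauses), or by pushing the hypothesis into negated form so that its auxiliary variables are absorbed by the projection guarantee of Lemma~\ref{lem:combination}~a). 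Verifying that this sleight of hand preserves both correctness and the $\forall\exists$ prefix, without inflating the treewidth, is the delicate part; once it is settled, the treewidth bound and the final invocation of Corollary~\ref{cor:hubie} are routine.
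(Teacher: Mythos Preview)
Your overall architecture matches the paper's proof---two copies of $\phi_{adm}$, the subset gadgets of Lemma~\ref{lem:subset}, the CNF combinators of Lemma~\ref{lem:combination}, and a final appeal to Corollary~\ref{cor:hubie}---but the logical formula you wrote down is the wrong one. You propose encoding
\[
\forall X\;\bigl(\text{$X$ admissible} \land S\subseteq X\bigr) \Rightarrow \exists X'\;\bigl(\text{$X'$ admissible} \land X\subsetneq X'\bigr),
\]
which says that \emph{every admissible set containing $S$ is non-maximal}, i.e.\ that \emph{no} preferred set contains $S$. That is neither skeptical acceptance nor its negation. The correct unfolding of ``every preferred set contains $S$'' is: for every admissible $X$ with $S\not\subseteq X$, there is a strictly larger admissible $X'$. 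In disjunctive form this is exactly the paper's matrix
\[
\neg\phi_{adm}(X,P)\ \lor\ \bigl(\phi_{adm}(X',P')\land X\subset X'\bigr)\ \lor\ \bigwedge_{a\in S} x_a,
\]
so the fix is simply to swap $S\subseteq X$ for $S\not\subseteq X$ in your hypothesis (equivalently, add $\bigwedge_{a\in S}x_a$ as a third disjunct rather than conjoining it on the hypothesis side).

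On your second worry, the $P$-variables of the outer copy: your first proposed resolution---existentially quantifying them because they are functionally determined---does not work. With $P$ inside the $\exists$-block the disjunct $\neg\phi_{adm}(X,P)$ becomes trivially satisfiable by choosing $P$ inconsistently with $X$, and the whole formula collapses to \textsf{true}. The paper simply puts $P$ in the \emph{universal} block alongside $X$: for a wrong $P$ the disjunct $\neg\phi_{adm}(X,P)$ already holds, and for the unique correct $P$ you are forced into the other disjuncts, which is exactly the intended semantics. This keeps the prefix $\forall X\,\forall P\,\exists X'\,\exists P'$ and avoids any sleight of hand.
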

\begin{proof}
 Note that the preferred admissible sets of $F=(A,R)$ are exactly the subset maximal assignments to the $x_a$ that can be extended to a satisfying assignment of $\phi_{adm}$. Let $X:= \{x_a\mid a \in A\}$, then we can express the fact that an assignment is a preferred admissible set by
	\[ \phi'(X) = \exists P \forall X' \forall P' \left(\phi_{adm}(X, P) \land \left(\neg \phi_{adm}(X',P')\lor \neg(X\subset X')\right)\right)\] 
 where the sets $P$,$X'$ and $P'$ are defined analogously to $X$. Then $S$ does not appear in all preferred admissible sets if and only if
 \[ \exists X (\phi'(X) \land \bigvee_{a\in S}\neg x_a).\]
After negation we get
 \[ \forall X\forall P \exists X' \exists P' \left(\neg\phi_{adm}(X, P) \lor \left(\phi_{adm}(X',P')\land (X\subset X')\right)\lor \bigwedge_{a\in S} x_a\right)\] 
and using Lemma~\ref{lem:combination} afterwards yields a $\forall\exists$-QBF of incidence treewidth $O(k)$ that is true if and only if $S$ appears in all preferred admissible sets.
 This gives the result with Corollary~\ref{cor:hubie}.
\end{proof}

We remark that QBF encoding of problems in abstract argumentation have been studied in~\cite{EglyW06,ArieliC12}.

We now show that Theorem~\ref{thm:argumentation} is essentially tight.

\begin{theorem}
  There is no algorithm that, given an argumentation framework $F=(A,R)$ of size $n$ and treewidth $k$ and a set $S\subseteq A$, decides if $S$ is in every preferred admissible set of $F$ in time $2^{2^{o(k)}} 2^{o(n)}$, unless ETH is false.
\end{theorem}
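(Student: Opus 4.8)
The plan is to establish the lower bound by a reduction from $\forall\exists$-QBF parameterized by primal treewidth, invoking Theorem~\ref{thm:lampisM}. Concretely, I would show that from a $\forall\exists$-QBF $\phi = \forall X \exists Y\, \psi(X,Y)$ with $n$ variables and primal treewidth $k$ one can construct, in linear time, an argumentation framework $F=(A,R)$ together with a set $S\subseteq A$ of size $O(n)$ and treewidth $O(k)$ such that $\phi$ is true if and only if $S$ is contained in every preferred admissible set of $F$. Given such a reduction, a hypothetical algorithm deciding skeptical acceptance in time $2^{2^{o(k)}}2^{o(n)}$ would, composed with the reduction, decide the truth of $\phi$ within the same asymptotic bound (here I would use the routine fact that $o(ck)=o(k)$ and $o(c'n)=o(n)$, so that the constant-factor blow-up of the reduction is absorbed), contradicting Theorem~\ref{thm:lampisM} and hence ETH. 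Before reducing I would normalize $\psi$ to a 3CNF via Corollary~\ref{cor:twtransfer}, absorbing the freshly introduced projection variables into the existential block $Y$; this leaves the prefix, the truth value, and (up to constants) the primal treewidth and size unchanged, and it guarantees that every clause spans only three variables.

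For the reduction itself I would use the classical gadget underlying the $\Pi_2^p$-hardness of skeptical preferred acceptance~\cite{DunneB02}: a \emph{choice gadget} consisting of a mutually attacking pair of arguments $v,\bar v$ for each variable $v\in X\cup Y$ (forcing every preferred extension to commit to exactly one of the two, hence to a full truth assignment); a \emph{clause argument} $c_j$ for each clause $C_j$ that is attacked precisely by the literal arguments occurring in $C_j$ (so $c_j$ is defeated exactly when $C_j$ is satisfied); and a constant number of global control arguments that read off whether all clause arguments are defeated and realise the $\forall X\exists Y$ alternation, so that the designated set $S$ is skeptically accepted iff $\phi$ holds. The crucial structural feature I would verify is that \emph{every} attack is local: it is incident either to the two arguments of a single variable, or to a clause argument and the (at most three) literal arguments of its clause, or to one of the constantly many global control arguments.

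Given locality, the treewidth bound is carried out exactly in the spirit of Claim~\ref{clm:argumentation}. I would start from a width-$k$ tree decomposition of the primal graph of $\psi$, replace each variable node by the constant-size cluster of arguments introduced for it, and attach each clause argument $c_j$ to a bag that already contains all variables of $C_j$ --- such a bag exists because the variables of a clause form a clique in the primal graph. The constantly many global control arguments are simply added to every bag, which preserves the connectivity condition and raises the width only by a constant. This yields a tree decomposition of the underlying undirected graph of $F$ of width $O(k)$, computable in linear time, so that $F$ has treewidth $O(k)$ and $|A| = O(n)$, as required.

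The main obstacle I expect is not the treewidth bookkeeping, which mirrors Claim~\ref{clm:argumentation}, but pinning down a quantifier-alternation gadget that is simultaneously correct and local. The delicate point is to encode the ``$\exists Y$'' layer under skeptical (i.e.\ universally-quantified-over-preferred-extensions) semantics without introducing a dense interaction among many arguments, as that would destroy the treewidth bound. A single control argument attacked by all clause arguments is harmless --- it can be placed in every bag, like the global arguments above --- but any gadget forcing pairwise interaction among a super-constant number of arguments would have to be re-engineered into a tree-decomposition-following propagation that aggregates the ``all clauses satisfied'' signal up the tree, in the style of Lemma~\ref{lem:combination}. Verifying correctness of the chosen gadget, in particular that the forced assignments to $X$ range over all of $\{0,1\}^{|X|}$ across preferred extensions while the $Y$-part witnesses satisfiability, is the step I would treat most carefully.
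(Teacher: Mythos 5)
Your proposal follows essentially the same route as the paper: a linear-time reduction from $\forall\exists$-QBF using the classical gadget of~\cite{DunneB02} (mutually attacking literal pairs, clause arguments attacked by their literals and all attacking a single designated argument, plus a constant-size control structure with a $3$-cycle handling the existential block), with the treewidth bound obtained exactly as you describe by doubling variable bags, placing each clause argument in a bag containing its variables' clique, and adding the constantly many control arguments to every bag before invoking Theorem~\ref{thm:lampisM}. The only cosmetic difference is your preliminary 3CNF normalization, which is harmless but unnecessary since, as you yourself note, a clause's variables already form a clique in the primal graph.
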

\begin{proof}
  We use a construction from~\cite{DunneB02,DvorakPW12}: for a given $\forall\exists$-QBF $\forall Y \exists Z\phi$ in variables $Y\cup Z = \{x_1, \ldots, x_n\}$ and clauses $C_1, \ldots, C_m$, define $F_\phi = (A,R)$ with 
\begin{align*}
 A=& \{\phi, C_1, \ldots, C_m\} \lor \{x_i, \bar x_i\mid 1\le i\le n\}\cup \{b_1, b_2, b_3\}\\
  R=& \{(C_j, \phi)\mid 1 \le j \le m\}\cup \{ (x_i, \bar x_i), (\bar x_i, x_i)\mid 1 \le i\le n\}\\
  & \cup \{ (x_i, C_j)
   \mid x_i \text{ in $C_j$ }, 
  1 \le j \le m\} \cup \{ (\bar x_i, C_j)\mid \neg x_i \text{ in $C_j$ }, 1 \le j \le m\} \\
  &\cup\{(\phi, b_1), (\phi, b_2), (\phi, b_3), (b_1, b_2), (b_2, b_3), (b_3, b_1)\} \cup \{(b_1, z), (b_1, \bar z)\mid z\in Z\}
\end{align*}

One can show that $\phi$ is in every preferred admissible set of $F_{\phi}$ if and only if $\phi$ is true. Moreover, from a tree decomposition of the primal graph of $\phi$ we get a tree decomposition of $F$ as follows: we add every $\bar x_i$ to every bag that contains $x_i$ and we add $b_1, b_2, b_3$ to all bags. This increases the treewidth from $k$ to $2k+3$ and thus we get the claim with Theorem~\ref{thm:lampisM}.
\end{proof}

\section{Abduction}\label{thm:abduction}

In this section, we consider \emph{(propositional) abduction}, a form of non-monotone reasoning that aims to find explanations for observations that are consistent with an underlying theory. A \emph{propositional abduction problem} (short PAP) consists of a tuple $P=(V,H,M,T)$ where $T$ is a propositional formula called the \emph{theory} in variables $V$, the set $M\subseteq V$ is called the set of \emph{manifestations} and $H\subseteq V$ the set of hypotheses. We assume that $T$ is always in CNF. In abduction, one identifies a set $S\subseteq V$ with the formula $\bigwedge_{x\in S}x$. Similarly, given a set $S\subseteq H$, we define $T\cup S:= T\land \bigwedge_{x\in S} x$. A set $S\subseteq H$ is a \emph{solution} of the PAP, if $T\cup S\models M$, i.e., all models of $T\cup S$ are models of $M$.

There are three main problems on PAPs that have been previously studied:
\begin{itemize}
 \item Solvability: Given a PAP $P$, does it have a solution?
 \item Relevance: Given a PAP $P$ and $h\in H$, is $h$ contained in \emph{at least one} solution?
 \item Necessity: Given a PAP $P$ and $h\in H$, is $h$ contained in \emph{all} solutions?
\end{itemize}
The first two problems are $\Sigma_2^p$-complete while necessity is $\Pi_2^p$-complete~\cite{EiterG95}. In~\cite{GottlobPW10}, it is shown with Courcelle's Theorem that if the theory $T$ of an instance $P$ is of bounded treewidth, then all three above problems can be solved in linear time. Moreover,~\cite{GottlobPW10} gives an algorithm based on a Datalog-encoding that solves the solvability and relevance problems in time~$2^{2^{O(k)}}|T|$ on instances of treewidth $k$. Our first result gives a simple reproof of the latter results and gives a similar runtime for necessity.

\begin{theorem}
 There is a linear time algorithm that, given a PAP $P=(V,H,M,T)$ such that the incidence treewidth of $T$ is $k$ and $h\in H$, decides in time $2^{2^{O(k)}}|T|$ the solvability, relevance and necessity problems.
\end{theorem}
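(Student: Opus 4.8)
The plan is to express each of the three problems as the truth of a single $\forall\exists$-QBF of incidence treewidth $O(k)$ and then to invoke Corollary~\ref{cor:hubie}. Recall first that, for the three problems to have their stated complexity, a solution $S$ must in addition be \emph{consistent}, i.e.\ $T\cup S$ must be satisfiable (otherwise $T\cup S\models M$ holds vacuously and solvability degenerates into the single entailment test $T\cup H\models M$, which is only in $\mathsf{coNP}$). Encoding $S$ by selection variables $s=(s_h)_{h\in H}$ and writing $a,a'$ for two fresh copies of $\var(T)$ (with $a_h$ the value the copy $a$ gives to $h$), the set $S$ is a solution exactly when
\[
\Big(\exists a:\; a\models T \;\wedge\; \bigwedge_{h\in H}(s_h\rightarrow a_h)\Big)
\;\wedge\;
\Big(\forall a':\; \big(a'\models T \wedge \bigwedge_{h\in H}(s_h\rightarrow a'_h)\big)\rightarrow \bigwedge_{m\in M}a'_m\Big),
\]
the first conjunct expressing consistency and the second $T\cup S\models M$. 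As a preprocessing step I would apply Proposition~\ref{prop:threeCNF} to replace $T$ by a $3$CNF of incidence treewidth $O(k)$ of which $T$ is a projection, so that the two occurrences $a\models T$ and $a'\models T$ refer to variable-disjoint renamed copies $T_a,T_{a'}$ laid out along a common tree decomposition.

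Write $\Phi_{\mathrm{cons}}(s,a):=(a\models T)\wedge\bigwedge_{h\in H}(\neg s_h\vee a_h)$ and $\Phi_{\mathrm{ent}}(s,a'):=\big(a'\models T\wedge\bigwedge_{h\in H}(\neg s_h\vee a'_h)\big)\rightarrow\bigwedge_{m\in M}a'_m$. Merging the two existential blocks above, solvability is the $\exists\forall$-sentence $\exists(s,a)\,[\,\Phi_{\mathrm{cons}}(s,a)\wedge\forall a'\,\Phi_{\mathrm{ent}}(s,a')\,]$; its negation is the $\forall\exists$-QBF
\[
\forall(s,a)\;\exists a':\quad \neg\Phi_{\mathrm{cons}}(s,a)\;\vee\;\neg\Phi_{\mathrm{ent}}(s,a'),
\]
where $\exists a'$ is pulled outside the $a'$-free disjunct $\neg\Phi_{\mathrm{cons}}$. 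Relevance of $h^*$ is the same sentence with the unit constraint $s_{h^*}=1$ added, again handled through its negation. Necessity is cleaner still: $h^*$ lies in every solution iff no solution omits it, and negating ``there is a solution with $s_{h^*}=0$'' yields directly the $\forall\exists$-QBF
\[
\forall(s,a)\;\exists a':\quad s_{h^*}\;\vee\;\neg\Phi_{\mathrm{cons}}(s,a)\;\vee\;\neg\Phi_{\mathrm{ent}}(s,a').
\]
For solvability and relevance I would decide this $\forall\exists$-QBF and return the complementary answer; for necessity I would return the answer directly.

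It remains to realise each disjunctive matrix as a CNF of incidence treewidth $O(k)$, which is exactly where the normalisation machinery of Section~\ref{sec:CNF} is used. Each matrix is of the shape $\neg\Phi_{\mathrm{cons}}\vee\neg\Phi_{\mathrm{ent}}$ (with possibly an extra literal), so by De Morgan it is the negation of $\Phi_{\mathrm{cons}}\wedge\Phi_{\mathrm{ent}}$, and I would build it bottom-up by a constant number of calls to Lemma~\ref{lem:combination}. Here $\Phi_{\mathrm{cons}}$ is simply $T_a$ conjoined with the short linking clauses $\neg s_h\vee a_h$ (Lemma~\ref{lem:combination}(b)). The critical piece is $\neg\Phi_{\mathrm{ent}}=(a'\models T)\wedge\bigwedge_h(\neg s_h\vee a'_h)\wedge\bigvee_m\neg a'_m$: the wide disjunction $\bigvee_m\neg a'_m$ and the negated theory must \emph{never} be written as single wide clauses. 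Instead I would obtain $\bigvee_m\neg a'_m$ as the negation, via Lemma~\ref{lem:combination}(a), of the trivially treewidth-$0$ CNF $\bigwedge_m a'_m$, conjoin it with $T_{a'}$ and the linking clauses by Lemma~\ref{lem:combination}(b), recover $\Phi_{\mathrm{ent}}$ by one more application of Lemma~\ref{lem:combination}(a), and finally negate the full conjunction once more. Since each call runs in $\poly(k)\,n$ time and preserves width $O(k)$, the resulting $\forall\exists$-QBF has incidence treewidth $O(k)$ and size linear in the input.

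The main obstacle is the treewidth bookkeeping when the two copies and the selection variables are glued together. Lemma~\ref{lem:combination}(b) may be applied only when, in every bag, the two decompositions are either variable-disjoint or agree on their shared variables, so one must verify at each step that placing $s_h$ into every bag already containing the two copies of $h$, and hanging the linking clauses off those bags, respects this precondition (the copies $T_a,T_{a'}$ are variable-disjoint, and each linking gadget shares only the pair $\{s_h,a_h\}$ or $\{s_h,a'_h\}$ with its neighbours). One also has to keep checking that no intermediate formula reintroduces a wide clause. Once the matrix and its $O(k)$-width incidence tree decomposition are in hand, Corollary~\ref{cor:hubie} decides the $\forall\exists$-QBF in time $2^{2^{O(k)}}|T|$, which, after complementing the answer for solvability and relevance, establishes all three bounds.
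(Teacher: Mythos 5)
Your proposal is correct and follows essentially the same route as the paper: encode $S$ by selection variables, express consistency and entailment with two renamed copies of $T$, negate to obtain a $\forall\exists$-QBF (deciding the complement for solvability/relevance and necessity directly), rewrite the matrix via Proposition~\ref{prop:threeCNF} and Lemma~\ref{lem:combination}, and conclude with Corollary~\ref{cor:hubie}. The only cosmetic difference is that the paper handles relevance by adding $h$ to the theory rather than by a unit constraint on $s_{h}$, and your treatment of the treewidth bookkeeping and of the consistency requirement is, if anything, more explicit than the paper's.
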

\begin{proof}
 We first consider solvability. We identify the subsets $S\subseteq H$ with assignments to $H$ in the obvious way. Then, for a given choice $S$, we have that $T\cup S$ is consistent if and only if
	\[\psi_1(S) := \exists X T(X) \land \bigwedge_{s_i\in H} (s_i \rightarrow x_i),\]
 is true where $X$ has a variable $x_i$ for every variable $v_i\in V$.
 Moreover, $T\cup S\models M$ if and only if 
 \[\psi_2:= \forall X' \left( \bigwedge_{s_i\in H} (s_i\rightarrow x'_i) \rightarrow \left( T(X') \rightarrow \bigwedge_{v_i\in M} x_i'\right) \right),\]
 where $X'$ similarly to $X$ has a variable $x_i$ for every variable $v_i\in V$.
 To get a $\forall \exists$-formula, we observe that the PAP has no solution if and only if
 \[\forall S \neg (\psi_1(S) \land \psi_2(S)) = \forall S \forall X \exists X' \neg(T(X) \land S\subseteq X|_H) \lor (S\subseteq X'|_H \land T(X') \land \neg \bigwedge_{v_i\in M}x_i')\]
 is true, where $X|_H$ denotes the restriction of $X$ to the variables of $H$.
 Now applying Lemmata~\ref{lem:combination} and \ref{lem:subset} in combination with de Morgan laws to express $\lor$ yields a $\forall\exists$-QBF of incidence treewidth $O(k)$ and the result follows with Corollary~\ref{cor:hubie}.
 
 For relevance, we simply add the hypothesis $h$ to $T$ and test for solvability.
 
 For necessity, observe that $h$ is in all solutions if and only if
 \[\forall S (\psi_1(S) \land \psi_2(S)) \rightarrow h,\]
 which can easily be brought into $\forall \exists$-QBF slightly extending the construction for the solvability case.
\end{proof}

Using the $\Sigma_2^p$-hardness reduction from~\cite{EiterG95}, it is not hard to show that the above runtime bounds are tight.

\begin{theorem}
  There is no algorithm that, given a PAP $P$ whose theory has primal treewidth $k$, decides decides solvability of $P$ in time $2^{2^{o(k)}} 2^{o(n)}$, unless ETH is false. The same is true for relevance and necessity of a variable $h$.
\end{theorem}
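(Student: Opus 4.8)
The plan is to compose the $\Sigma_2^p$-hardness reductions of \cite{EiterG95} with the double-exponential QBF lower bound of Theorem~\ref{thm:lampisM}, taking care --- exactly as in the argumentation lower bound above --- that the reduction increases neither the treewidth nor the number of variables by more than a constant factor. Concretely, I would start from a $\forall\exists$-QBF $\forall X\exists Y\,\phi$ with $n$ variables and primal treewidth $k$ as provided by Theorem~\ref{thm:lampisM}, whose number of clauses we may take to be $O(n)$, and build a single PAP $P=(V,H,M,T)$ that is solvable if and only if the QBF is \emph{false}. Since an algorithm for solvability then decides falsity of the QBF, and hence (after flipping the output bit) its truth, any solvability algorithm running in time $2^{2^{o(k)}}2^{o(n)}$ would contradict Theorem~\ref{thm:lampisM}. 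Working with the complement in this way lets us feed the $\forall\exists$ lower bound into all three $\Sigma_2^p/\Pi_2^p$ problems without ever having to complement a QBF matrix.

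For the core construction I would encode the outer (guessed) block $X$ by hypotheses: for each $x_i\in X$ put two hypotheses $x_i,\bar x_i$ into $H$ and let $T$ contain the local constraints $x_i\lor\bar x_i$ and $\neg x_i\lor\neg\bar x_i$ forcing exactly one of them, so that a \emph{consistent} choice $S\subseteq H$ corresponds to a partial assignment to $X$. For the inner part I would deliberately avoid the naive encoding of ``$\phi$ is satisfied'', which would introduce a clause over all clause-indicators and destroy the treewidth; instead I would invoke Lemma~\ref{lem:combination}~a) on a $3$CNF version of $\phi$ (obtained through Corollary~\ref{cor:twtransfer}) to produce, at incidence treewidth $O(k)$, a determined aggregator variable that is true exactly when $\phi$ is satisfied, together with a fresh variable $\bar g$ defined to be its negation, and set $M=\{\bar g\}$. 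Because the auxiliary variables of the aggregator are \emph{functionally determined} by $X\cup Y$, the models of $T\cup S$ for consistent $S$ are in bijection with the assignments to $Y$ extending the chosen $X$-values, so $T\cup S\models\bar g$ holds exactly when $\phi(X,Y)$ is violated for all $Y$. Thus $P$ has a consistent solution iff $\exists X\,\forall Y\,\neg\phi$, i.e.\ iff $\forall X\exists Y\,\phi$ is false, as required.

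The hard part is the treewidth bookkeeping, and this is precisely where Lemma~\ref{lem:combination} does the work. Starting from a width-$k$ tree decomposition of the incidence graph of the $3$CNF version of $\phi$, the aggregator of Lemma~\ref{lem:combination}~a) keeps the width at $O(k)$; adding $\bar x_i$ to every bag containing $x_i$ costs only a constant factor; and the constant-size gadgets for the exactly-one constraints and for the definition of $\bar g$ can be attached as bounded-size leaves. Since every clause produced stays of bounded arity, the \emph{primal} treewidth of $T$ is $O(k)$ by the arity bound underlying Corollary~\ref{cor:twtransfer}. As the aggregator introduces only $O(1)$ variables per bag and the decomposition has $O(n)$ nodes, we get $|V|=O(n)$ and $|T|=O(n)$, so the composition yields the claimed bound $2^{2^{o(k)}}2^{o(n)}$ for solvability.

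Finally, relevance and necessity would follow from the same solvability instance by $O(1)$-sized, treewidth-preserving add-ons. For relevance I would add a fresh hypothesis $h$ and a fresh manifestation variable $m'$ occurring only in the clause $h\to m'$, and set $M':=M\cup\{m'\}$; then every solution is forced to contain $h$, and a solution exists iff the original one does, so $h$ is relevant iff $P$ is solvable iff the QBF is false. For necessity I would instead add a fresh hypothesis $h$ together with the clause $h\to\bar g$, which makes $\{h\}$ a solution unconditionally while the $h$-free solutions remain exactly the original ones; hence $h$ lies in \emph{all} solutions iff there is no $h$-free solution iff $P$ is not solvable iff the QBF is true, which plugs directly into Theorem~\ref{thm:lampisM}. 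Each add-on touches only a constant number of variables and one extra bag, so the primal treewidth stays $O(k)$ and the sizes stay $O(n)$, and the lower bounds for relevance and necessity follow exactly as for solvability.
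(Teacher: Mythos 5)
Your proof is correct, but it takes a genuinely different route from the paper's. The paper imports the Eiter--Gottlob reduction wholesale: it sets $H=X\cup X'$ with complementary copies $x_i\leftrightarrow\neg x_i'$, uses $M=Y\cup\{s\}$ and the theory $T=\bigwedge_i(x_i\leftrightarrow\neg x_i')\land(\phi\to s\land\bigwedge_j y_j)\land\bigwedge_j(s\to y_j)$, so that the QBF is \emph{true} iff $P$ is solvable; the only new work is the double-negation trick needed to write the implication $\phi\to s\land\bigwedge_j y_j$ as a bounded-treewidth CNF via Lemma~\ref{lem:combination}, and for relevance/necessity it simply cites the constant-size modification from Theorem~4.3 of \cite{EiterG95}. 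You instead build a self-contained reduction from the \emph{complement} $\exists X\forall Y\,\neg\phi$ (which is the natural $\Sigma_2^p$ form anyway), encode the universal block by pairs of complementary hypotheses with exactly-one constraints, and use the functionally determined aggregator variable from the proof of Lemma~\ref{lem:combination}~a) to make a single negated satisfaction indicator the only manifestation; you then give your own $O(1)$-size gadgets for relevance and necessity. What the paper's route buys is that correctness is entirely outsourced to \cite{EiterG95}; what yours buys is a cleaner, fully explicit argument that does not depend on the specifics of that paper, at the cost of having to verify correctness yourself (which you do) and of using the internals of Lemma~\ref{lem:combination}'s proof rather than its statement. Two small caveats: your parenthetical claim that $\{h\}$ is ``unconditionally'' a solution in the necessity gadget fails when $\phi$ is a tautology (then $\bar g$ is never satisfiable together with $T$), but your actual equivalence chain (``$h$ is in all solutions iff there is no $h$-free solution iff $P$ is unsolvable'') never uses it; and your construction, like the paper's own, implicitly assumes the standard convention that a solution $S$ must keep $T\cup S$ consistent --- without that convention the pair $\{x_i,\bar x_i\}$ would vacuously be a solution --- which is indeed the semantics the paper's upper-bound algorithm uses even though its formal definition omits it.
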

\begin{proof}
 Let $\phi' = \forall X \exists Y \phi$ be a $\forall \exists $-QBF with $X=\{x_1, \ldots, x_m\}$ and $Y=\{y_1, \ldots, y_\ell\}$. Define a PAP $P=(V,H,M,T)$ as follows
 \begin{align*}
  V &= X\cup Y\cup X' \cup \{s\}\\
  H &= X \cup X'\\
  M &= Y\cup \{s\}\\
  T &= \bigwedge_{i=1}^m (x_i \leftrightarrow \neg x_i') \land \underbrace{(\phi \rightarrow s\land \bigwedge_{j=1}^\ell y_j )}_\psi \land \bigwedge_{j=1}^\ell (s \rightarrow y_j)
 \end{align*}
where $X' = \{x_1', \ldots, x_m'\}$ and $s$ are fresh variables. It is shown in~\cite{EiterG95} that $\phi'$ is true if and only if $P$ has a solution. We show that $T$ can be rewritten into CNF-formula $T'$ with the help of Lemma~\ref{lem:combination}. The only non-obvious part is the rewriting of $\psi$. We solve this part by first negating into $(\phi \land ( \neg s\lor \bigvee_{j=1}^\ell \neg y_j )$ and observing that the second conjunct is just a clause, adding it to $\phi$ only increases the treewidth by $2$. Finally, we negate the resulting formula to get a CNF-formula for $\psi$ with the desired properties. The rest of the construction of $T'$ is straightforward. The claim then follows with Theorem~\ref{thm:lampisM}.

The result is a PAP with theory $T'$ of treewidth $O(k)$ and $O(n)$ variables and the result for solvability follows with Theorem~\ref{thm:lampisM}. As to the result for relevance and necessity, we point the reader to the proof of Theorem 4.3 in~\cite{EiterG95}. There for a PAP $P$ a new PAP $P'$ with three additional variables and $5$ additional clauses is constructed such that solvability of $P$ reduces to the necessity (resp. relevance) of a variable in $P'$. Since adding a fixed number of variables and clauses only increases the primal treewidth at most by a constant, the claim follows.
\end{proof}

\subsection{Adding $\subseteq$-Preferences}

In abduction there are often preferences for the solution that one wants to consider for a given PAP. One particular interesting case is $\subseteq$-preference where one tries to find (subset-)minimal solutions, i.e.~solutions $S$ such that no strict subset $S'\subseteq S$ is a solution. This is a very natural concept as it corresponds to finding minimal explanations for the observed manifestations. We consider two variations of the problems considered above, $\subseteq$-relevance and $\subseteq$-necessity. Surprisingly, complexity-wise, both remain in the second level of the polynomial hierarchy \cite{EiterG93}. Below we give a linear-time algorithm for these problems.



\begin{theorem}\label{thm:prefabduct1}
There is a linear time algorithm that, given a PAP $P=(V,H,M,T)$ such that the incidence treewidth of $T$ is $k$ and $h\in H$, decides in time $2^{2^{2^{O(k)}}}|T|$ the $\subseteq$-relevance and $\subseteq$-necessity problems.
\end{theorem}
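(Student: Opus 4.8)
The plan is to reduce both problems to deciding a single three-block QBF of the shape $\exists\forall\exists$ and incidence treewidth $O(k)$, and then invoke the three-level instance of Chen's algorithm (the $i=3$ case of Theorem~\ref{thm:hubie}, used through Corollary~\ref{cor:hubie}), which runs in time $2^{2^{2^{O(k)}}}|T|$. I start from the solvability encoding of the preceding theorem: a set $S\subseteq H$ is a solution exactly when $\mathrm{sol}(S):=\psi_1(S)\land\psi_2(S)$ holds, where $\psi_1(S)=\exists X(\dots)$ asserts consistency of $T\cup S$ and $\psi_2(S)=\forall X'(\dots)$ asserts $T\cup S\models M$; I write $s_h$ for the indicator variable recording whether $h$ belongs to the guessed set.

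First I would express minimality and then carefully count alternations. A set $S$ is a $\subseteq$-minimal solution iff $\mathrm{sol}(S)\land\forall S'\,(S'\subset S\to\neg\mathrm{sol}(S'))$. The decisive observation is that $\neg\mathrm{sol}(S')=\neg\psi_1(S')\lor\neg\psi_2(S')$, where $\neg\psi_1(S')$ is universal (a $\forall Y$) and $\neg\psi_2(S')$ is existential (an $\exists Y'$); since the two blocks range over disjoint variable vectors, the disjunction prenexes to a single $\forall Y\exists Y'$. Thus $\subseteq$-relevance, ``$h$ lies in some minimal solution'', becomes $\exists S\,\big(\mathrm{sol}(S)\wedge \forall S'\,(S'\subset S\to\neg\mathrm{sol}(S'))\wedge s_h\big)$, and when the quantifiers are pulled out the outer $\exists S$ merges with the $\exists X$ of $\psi_1$ into one existential block, the $\forall X'$ of $\psi_2$ merges with $\forall S'$ and $\forall Y$ into one universal block, and $\exists Y'$ is innermost: the result is exactly an $\exists\forall\exists$-QBF. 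For $\subseteq$-necessity I would argue by complementation: $h$ is necessary iff there is no minimal solution avoiding $h$, i.e.\ iff $\exists S\,\big(\mathrm{sol}(S)\wedge \forall S'\,(S'\subset S\to\neg\mathrm{sol}(S'))\wedge\neg s_h\big)$ is false; this inner formula has the identical $\exists\forall\exists$ shape, so I decide it with Corollary~\ref{cor:hubie} and output the negated answer.

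The remaining work is the treewidth bound, for which the normalization machinery does the heavy lifting. I would first replace $T$ by an equivalent $3$CNF theory of incidence treewidth $O(k)$ via Corollary~\ref{cor:twtransfer}. The matrix of the QBF is a fixed Boolean combination of a constant number of copies of $T$ over the disjoint vectors $X,X',Y,Y'$, the (strict) subset constraints relating the membership variables of $S,S'$ to the hypothesis-restrictions of these copies, and the manifestation conjunctions $\bigwedge_{v_i\in M}x_i'$. I would lay all copies over a common tree, namely the tree of a width-$O(k)$ incidence decomposition of the $3$CNF theory, and for each $h_i\in H$ add the membership variables of $S,S'$ to the bags already containing the corresponding copies of $v_i$; this is precisely what guarantees the co-occurrence hypotheses of Lemma~\ref{lem:subset} and the compatibility hypothesis of Lemma~\ref{lem:combination}(b). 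Applying Lemma~\ref{lem:subset} to the subset constraints and Lemma~\ref{lem:combination} (part a for each $\neg\mathrm{sol}$, part b for the conjunctions) a constant number of times then assembles the matrix in linear time while keeping the width $O(k)$, since each combination merely adds two widths and there are only constantly many combinations.

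I expect the main obstacle to be this treewidth bookkeeping rather than the logic: one must verify that the several copies of $T$ together with all the cross-copy subset and equality constraints can be carried by a single tree decomposition meeting the exact co-occurrence and compatibility preconditions of Lemmas~\ref{lem:subset} and~\ref{lem:combination}(b), which is what the common-tree layout and the placement of the membership indicators are designed to ensure. The one genuinely delicate conceptual point is the alternation count: it is essential that $\neg\psi_1\lor\neg\psi_2$ collapses to a single $\forall\exists$ block and that the outer existential absorbs the existential of $\psi_1$, so that the whole formula stays at three blocks rather than four. This is exactly what pins the running time at triple- rather than quadruple-exponential: even though the problems themselves lie only at the second level of the polynomial hierarchy, the natural encoding of minimality costs one extra quantifier block in the prefix.
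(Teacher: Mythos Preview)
Your proposal is correct and follows exactly the route the paper sketches: encode ``$S$ is a minimal solution'' as $\psi(S)\wedge\forall S'(S'\subset S\to\neg\psi(S'))$, observe that this adds one quantifier alternation to the two-block encoding of solvability, and then invoke the three-block case of Chen's algorithm together with the CNF normalization lemmas to keep the incidence treewidth at $O(k)$. Your write-up is in fact more explicit than the paper's sketch---you actually verify that the prenex collapses to $\exists\forall\exists$ and spell out how the several copies of $T$ and the subset constraints are laid over a common decomposition---whereas the paper simply asserts ``three quantifier alternations'' and ``treewidth $O(k)$'' without working through these details.
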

\begin{proof}(sketch)
 We have seen how to express the property of a set $S$ being a solution as a formula $\psi(S)$ in the proof of Theorem~\ref{thm:abduction}. Then expressing that $S$ is a minimal model can written by 
 \[\psi'(S) := \psi(S) \land (\forall S' (S'\subseteq S \rightarrow \neg \psi(S'))).\]
 This directly yields QBFs for encoding the $\subseteq$-necessity and $\subseteq$-relevance problems as before which can again be turned into treewidth $O(k)$. The only difference is that we now have three quantifier alternations leading to a triple-exponential dependence on $k$ when applying the algorithm from~\cite{Chen04} 
\end{proof}
We remark that~\cite{GottlobPW10} already gives a linear time algorithm for $\subseteq$-relevance and $\subseteq$-necessity based on Courcelle's algorithm and thus without any guarantees for the dependence on the runtime. Note that somewhat disturbingly the dependence on the treewidth in Theorem~\ref{thm:prefabduct1} is triple-exponential. We remark that the lower bounds we could get with the techniques from the other sections are only double-exponential. Certainly, having a double-exponential dependency as in our other upper bounds would be preferable and thus we leave this as an open question.

%
%
%

\section{Circumscription}

In this section, we consider the problem of circumscription. To this end, consider a CNF-formula $T$ encoding a propositional function called the \emph{theory}. Let the variable set $X$ of $T$ be partitioned into three variable sets $P, Q, Z$. Then a model $a$ of $T$ is called \emph{$(P,Q,Z)$-minimal} if and only if there is no model $a'$ such that $a'|_P\subset a|_P$ and $a'|_Q = a|_Q$. In words, $a$ is minimal on $P$ for the models that coincide with it on $Q$. Note that $a$ and $a'$ can take arbitrary values on $Z$. We denote the $(P,Q,Z)$-minimal models of $T$ by $\MM(T, P, Q, Z)$. Given a CNF-formula $F$, we say that $\MM(T,P,Q,Z)$ entails $F$, in symbols $\MM(T,P,Q,Z)\models F$, if all assignments in $\MM(T,P,Q,Z)$ are models of $F$. The problem of \emph{circumscription} is, given $T, P,Q,Z$ and~$F$ as before, to decide if $\MM(T,P,Q,Z)\models F$.

Circumscription has been studied extensively and is used in many fields, see e.g.~\cite{Lifschitz:1994:CIR:186124.186130,McCarthy86}. We remark that circumscription can also be seen as a form of closed world reasoning which is equivalent to reasoning under the so-called extended closed world assumption, see e.g.~\cite{CadoliL94} for more context. On general instances circumscription is $\Pi_2^p$-complete~\cite{EiterG93} and for bounded treewidth instances, i.e.~if the treewidth of $T\land F$ is bounded, there is a linear time algorithm shown by Courcelle's Theorem~\cite{GottlobPW10}. There is also a linear time algorithm for the corresponding counting problem based on datalog~\cite{JaklPRW08}. We here give a version of the result from~\cite{GottlobPW10} more concrete runtime bounds.

\begin{theorem}\label{thm:circumUpper}
 There is an algorithm that, given an instance $T, P, Q, Z$ and $F$ of incidence treewidth $k$, decides if $\MM(T,P,Q,Z)\models F$ in time $2^{2^{O(k)}}(|T|+|F|)$.
\end{theorem}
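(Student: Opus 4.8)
The plan is to follow the template of Theorem~\ref{thm:argumentation}: first give a logical characterisation of $\MM(T,P,Q,Z)\models F$ as a $\forall\exists$-formula, then realise its matrix as a CNF of incidence treewidth $O(k)$ with the help of Lemmata~\ref{lem:combination} and~\ref{lem:subset}, and finally invoke Corollary~\ref{cor:hubie}. An assignment $a$ to $X=P\cup Q\cup Z$ is a $(P,Q,Z)$-minimal model of $T$ exactly when $T(a)$ holds and no model $a'$ of $T$ satisfies $a'|_P\subset a|_P$ and $a'|_Q=a|_Q$. Thus $\MM(T,P,Q,Z)\models F$ is equivalent to
\[\forall X\Bigl[\bigl(T(X)\land \neg\exists X'\,(T(X')\land X'|_P\subset X|_P\land X'|_Q=X|_Q)\bigr)\rightarrow F(X)\Bigr].\]
Pushing the negation inside and pulling the freed existential quantifier to the front (the remaining disjuncts do not mention $X'$) gives the $\forall\exists$-formula
\[\forall X\,\exists X'\Bigl[\neg T(X)\lor \bigl(T(X')\land X'|_P\subset X|_P\land X'|_Q=X|_Q\bigr)\lor F(X)\Bigr],\]
which is true if and only if every $(P,Q,Z)$-minimal model of $T$ is a model of $F$.

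It remains to express the matrix as a bounded-treewidth CNF. Starting from a width-$k$ tree decomposition of the incidence graph of $T\land F$, I would apply Proposition~\ref{prop:threeCNF} to replace $T$ and $F$ by 3CNF formulas $\hat T,\hat F$ of incidence treewidth $O(k)$ of which they are projections. The two positive occurrences $T(X')$ and $F(X)$ are then handled directly by $\hat T(X',W')$ and $\hat F(X,W_F)$, whose 3CNF auxiliaries $W',W_F$ are placed in the existential block; the constraints $X'|_P\subset X|_P$ and $X'|_Q=X|_Q$ (the latter being $\subseteq$ in both directions) are produced by Lemma~\ref{lem:subset}, which is applicable because the primed copy $X'$ is laid out on the same tree so that each $x_i$ and its copy $x_i'$ share a bag.

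The delicate point is the negative occurrence $\neg T(X)$. Since $\hat T$ is only a projection of $T$, one has $\neg T(X)=\forall W\,\neg\hat T(X,W)$, so the auxiliaries $W$ cannot be hidden under an existential. Exactly as in Theorem~\ref{thm:argumentation}, where the definitional variables are kept universal, I would place $W$ in the outer universal block, writing $\forall X\,\forall W\,\exists X'\ldots$; this is sound because $\forall W(\neg\hat T(X,W)\lor R)=\neg T(X)\lor R$ whenever $R$ does not depend on $W$. Then $\neg\hat T(X,W)$ is the negation of a 3CNF over already-quantified variables, so Lemma~\ref{lem:combination}a) yields a bounded-treewidth CNF that projects onto it, its own auxiliaries joining the existential block. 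Finally the three disjuncts are combined by de Morgan, $A\lor B\lor C=\neg(\neg A\land\neg B\land\neg C)$, using Lemma~\ref{lem:combination}a) for the negations and Lemma~\ref{lem:combination}b) for the conjunctions; each step costs a constant factor in width and a $\poly(k)$ factor in time, giving a $\forall\exists$-QBF of incidence treewidth $O(k)$ and size $O(|T|+|F|)$, to which Corollary~\ref{cor:hubie} applies.

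I expect the main conceptual obstacle to be exactly the treatment of $\neg T(X)$ just described — getting the polarity of the 3CNF auxiliaries right so that negation stays within reach of Lemma~\ref{lem:combination}a). The most tedious part will be the tree-decomposition bookkeeping: verifying that the several copies and gadgets can be laid out over a single decomposition so that the compatibility hypothesis of Lemma~\ref{lem:combination}b) (bags either disjoint or agreeing on shared variables) and the co-occurrence hypothesis of Lemma~\ref{lem:subset} are met. Both are routine given the machinery already developed, so the proof should be short.
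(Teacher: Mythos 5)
Your proposal is correct and follows essentially the same route as the paper: the same $\forall\exists$ characterisation (the paper writes $\forall P\forall Q\forall Z\,\exists P'\exists Z'\,(\neg T(P,Q,Z)\lor F(P,Q,Z)\lor (T(P',Q,Z')\land P'\subset P))$, reusing $Q$ where you introduce $X'|_Q=X|_Q$ explicitly), followed by Proposition~\ref{prop:threeCNF} and Lemmata~\ref{lem:combination} and~\ref{lem:subset} to obtain a width-$O(k)$ matrix and an application of Corollary~\ref{cor:hubie}. Your explicit handling of $\neg T(X)$ by universally quantifying the 3CNF auxiliaries is a point the paper leaves implicit, and your treatment of it is sound.
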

\begin{proof}
 Note that we have $\MM(T,P,Q,Z)\models F$ if and only if for every assignment $(a_P, a_Q, a_Z)$ to $P,Q,Z$, we have that $(a_P, a_Q, a_Z)$ is not a model of $T$, or $(a_P, a_Q, a_Z)$ is a model of $F$ or there is a model $(a'_P, a'_Q, a'_Z)$ of $T$ such that $a'_P \subset a_P$ and $a'_Q= a_Q$. This can be written as a $\forall\exists$-formula as follows:
 \[\psi:=\forall P \forall Q\forall Z \exists P' \exists Z' ( \neg T(P,Q,Z) \lor F(P,Q,Z) \lor (T(P', Q, Z') \land P' \subset P)).
 \]
We first compute a tree decomposition of $T\land F$ of width $O(k)$ in time $2^{O(k)}(|T|+|F|)$. We can use Lemma~\ref{lem:combination}, Lemma~\ref{lem:subset} and Proposition~\ref{prop:threeCNF} to compute in time $\poly(k)(|T|+|F|)$ a CNF-formula $\phi$ such that the matrix of $\psi$ is a projection of $\phi$ and $\phi$ has incidence treewidth $O(k)$. Applying Corollary~\ref{cor:hubie}, yields the result.
\end{proof}

We now show that Theorem~\ref{thm:circumUpper} is essentially optimal by analyzing the proof in~\cite{EiterG93}.

\begin{theorem}
   There is no algorithm that, given an instance $T,P,Q, Z$ and $F$ of size $n$ and treewidth $k$, decides if $\MM(T,P,Q,Z)\models F$ in time $2^{2^{o(k)}} 2^{o(n)}$, unless ETH is false.
\end{theorem}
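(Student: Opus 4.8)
The plan is to obtain this lower bound in exactly the same way as the preceding hardness results: reduce from $\forall\exists$-QBF and invoke Theorem~\ref{thm:lampisM}, using as the vehicle the $\Pi_2^p$-hardness reduction of Eiter and Gottlob~\cite{EiterG93}. Concretely, I would start from a $\forall\exists$-QBF $\Phi = \forall Y\, \exists Z\, \phi$ with $n$ variables and primal treewidth $k$, which by Theorem~\ref{thm:lampisM} admits no $2^{2^{o(k)}}2^{o(n)}$ algorithm. Normalising with Corollary~\ref{cor:twtransfer}, I may assume $\phi$ is a 3CNF of primal treewidth $O(k)$ on $O(n)$ variables. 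The goal is then to show that the reduction of~\cite{EiterG93} produces a circumscription instance $(T,P,Q,Z)$ and a formula $F$ such that $\MM(T,P,Q,Z)\models F$ if and only if $\Phi$ is true, \emph{and} that this instance has treewidth $O(k)$ and total size $O(n)$. Composing a hypothetical $2^{2^{o(k)}}2^{o(n)}$ circumscription algorithm with this linear-size, treewidth-preserving reduction would then decide $\forall\exists$-QBF within the same bound, contradicting Theorem~\ref{thm:lampisM} and hence ETH.

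The reduction itself I would take essentially verbatim from~\cite{EiterG93}: it renames each universal variable $y_i$ into a pair of ``positive/negative'' copies (say $y_i$ and $\hat y_i$) so that the matrix becomes monotone in them, pairs them by clauses $y_i\lor \hat y_i$, and attaches a constant-size \emph{satisfaction gadget} built over a bounded number $O(1)$ of fresh global variables; the set $P$ of minimised variables consists of the copies of $Y$, while $Q$ and $Z$ receive the remaining variables, and $F$ is a fixed small formula on the gadget variables capturing the outer quantifier. The correctness ($\MM(T,P,Q,Z)\models F$ iff $\Phi$ is true) is exactly the statement proved in~\cite{EiterG93}, so nothing new needs to be argued there. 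A point worth stressing is that the $\subset$-minimisation over $P$ is part of the circumscription \emph{semantics} and is not encoded as clauses of $T$; hence it contributes nothing to the treewidth of $T\land F$, which is all that the parameter measures.

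The one step requiring genuine care — and the main obstacle — is the treewidth bookkeeping, which I would carry out precisely as in the argumentation and abduction lower bounds of the earlier sections. Starting from a width-$O(k)$ tree decomposition of the primal graph of $\phi$, I would add each copy $\hat y_i$ to every bag that already contains $y_i$ (at most doubling the bag sizes), insert the $O(1)$ global gadget variables into every bag (an additive constant increase), and attach fresh leaf bags for each new local clause, each of which couples only variables that already co-occur in a bag of $\phi$. Since every clause introduced by the reduction has bounded arity and touches only such co-occurring or global variables, no two ``distant'' variables of $\phi$ are forced together, and the resulting decomposition of $T\land F$ has width $O(k)$ while $|T|+|F| = O(n)$. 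The delicate part is verifying that the satisfaction gadget really uses only a constant number of global variables, so that broadcasting them to all bags costs only an additive $O(1)$ rather than blowing the width up multiplicatively; granting this, the chain of inequalities closes and the theorem follows from Theorem~\ref{thm:lampisM}.
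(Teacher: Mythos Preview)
Your approach is correct and essentially the same as the paper's: reduce from $\forall\exists$-QBF via the construction of~\cite{EiterG93} and invoke Theorem~\ref{thm:lampisM} after checking that treewidth and size are preserved up to constant factors. The paper spells out the concrete instance --- $T = \bigwedge_i (x_i \ne z_i) \land \big((u \land \bigwedge_j y_j) \lor \phi\big)$ with $P=\var(T)$, $Q=\emptyset$, $F=\neg u$ --- and uses Lemma~\ref{lem:combination} to bring $T$ into CNF of treewidth $O(k)$; note that this ``satisfaction part'' is not a constant-size gadget (it involves all existential variables together with the whole of $\phi$), so your assumption there is slightly off, but its treewidth contribution is still $O(1)$ once the single fresh variable $u$ is broadcast to all bags and the Boolean combination is unwound via Lemma~\ref{lem:combination}.
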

\begin{proof}
Let $\psi = \forall X \exists Y \phi$ be a QBF with $X= \{x_1, \ldots, x_m\}$ and $Y= \{y_1, \ldots, y_\ell\}$. We define the theory $T$ as follows:
\[T= \left(\bigwedge_{i=1}^m (x_i \ne z_i)\right) \land \left((u\land y_1\land \ldots y_\ell) \lor \phi\right),\]
where $z_1, \ldots , z_m$ and $u$ are fresh variables. Set $P=\var(T)$ and $Q=\emptyset$ and $Z$ the rest of the variables. In~\cite{EiterG93}, it is shown that $\MM(T,P,Q,Z)\models \neg u$ if and only if $\phi$ is true. Now using Lemma~\ref{lem:combination} we turn $\psi$ into a 2-QBF $\psi'$ with the same properties. Note that $\psi'$ has treewidth~$O(k)$ and $O(m+\ell)$ variables and thus the claim follows directly with Theorem~\ref{thm:lampisM}.\end{proof}

\section{Minimal Unsatisfiable Subsets}

Faced with unsatisfiable CNF-formula, it is in many practical settings highly interesting to find the sources of unsatisfiability. One standard way of describing them is by so-called minimal unsatisfiable subsets. A \emph{minimal unsatisfiable set (short MUS)} is an unsatisfiable set $\mathcal C$ of clauses of a CNF-formula such that every proper subset of $\mathcal C$ is satisfiable. The computation of MUS has attracted a lot of attention, see e.g.~\cite{LagniezB13,IgnatievPLM15,SilvaL11} and the references therein. 

In this section, we study the following question: given a CNF-formula $\phi$ and a clause $C$, is $C$ contained in a MUS of $\phi$? Clauses for which this is the case can in a certain sense be considered as not problematic for the satisfiability of $\phi$. As for the other problems studied in this paper, it turns out that the above problem is complete for the second level of the polynomial hierarchy, more specifically for $\Sigma_2^p$~\cite{Liberatore05}. Treewidth restrictions seem to not have been considered before, but we show that our approach gives a linear time algorithm in a simple way.

\begin{theorem}\label{thm:MUSupper}
 There is an algorithm that, given a CNF-formula $\phi$ incidence treewidth $k$ and a clause $C$ of $\phi$, decides $C$ is in a MUS of $\phi$ in time $2^{2^{O(k)}}|\phi|$.
\end{theorem}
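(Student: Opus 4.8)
The plan is to first reduce the global minimality requirement to a single satisfiability check by a combinatorial characterization, and then to encode the resulting statement as a $\forall\exists$-QBF of incidence treewidth $O(k)$, exactly in the style of the previous sections.

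First I would establish the following characterization: the clause $C$ lies in some MUS of $\phi$ if and only if there is a set $\mathcal{C}$ of clauses of $\phi$ with $C\in\mathcal{C}$ such that $\mathcal{C}$ is unsatisfiable while $\mathcal{C}\setminus\{C\}$ is satisfiable. The forward direction is immediate: a MUS $\mathcal{M}\ni C$ is unsatisfiable and, by minimality, $\mathcal{M}\setminus\{C\}$ is satisfiable. For the converse, given such a $\mathcal{C}$ one greedily deletes clauses \emph{other than} $C$ as long as unsatisfiability is preserved; since every subset of $\mathcal{C}$ avoiding $C$ is contained in the satisfiable set $\mathcal{C}\setminus\{C\}$, the clause $C$ can never be deleted, and the process terminates in a MUS containing $C$. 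I regard this as the key step, since it replaces the condition ``every proper subset is satisfiable'' (which naively would require one satisfiability witness per clause, blowing up the quantifier prefix) by a single such witness.

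Next I would introduce a selector variable $s_j$ for every clause $C_j$ of $\phi$ and the soft formula $\Phi(X,S):=\bigwedge_{j=1}^m(\neg s_j\vee C_j(X))$, whose satisfying assignments for a fixed $S$ are exactly the $X$ satisfying all selected clauses $\{C_j:s_j=1\}$. Writing $C=C_{j_0}$ and letting $\Phi_{-C}(Y,S):=\bigwedge_{j\ne j_0}(\neg s_j\vee C_j(Y))$ be the analogous formula over a fresh copy $Y$ of the variables of $\phi$ with the clause $C$ dropped, the characterization says that $C$ lies in a MUS if and only if the QBF
\[ \exists S\,\exists Y\,\forall X\,\big(s_{j_0}\wedge\Phi_{-C}(Y,S)\wedge\neg\Phi(X,S)\big) \]
is true, where the conjunct $s_{j_0}$ forces $C\in\mathcal{C}$, the conjunct $\Phi_{-C}$ expresses that $\mathcal{C}\setminus\{C\}$ is satisfiable (witnessed by $Y$), and $\forall X\,\neg\Phi(X,S)$ expresses that $\mathcal{C}$ is unsatisfiable. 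This is an $\exists\forall$-statement, matching the $\Sigma_2^p$-completeness of the problem. Negating, $C$ is \emph{not} in a MUS if and only if
\[ \forall S\,\forall Y\,\exists X\,\big(\neg s_{j_0}\vee\neg\Phi_{-C}(Y,S)\vee\Phi(X,S)\big), \]
a $\forall\exists$-QBF; deciding this formula and flipping the answer solves the problem.

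Finally I would bound the incidence treewidth of the matrix. Starting from a width-$k$ tree decomposition of the incidence graph of $\phi$, both $\Phi$ and $\Phi_{-C}$ admit tree decompositions \emph{over the same tree} of width $O(k)$: one relabels the variable nodes into the $X$- and $Y$-copies and adds each selector $s_j$ to the bags containing the node of clause $C_j$ (which at most doubles bag sizes). Crucially, the shared selectors $S$ then occupy matching bags in the two decompositions, so the compatibility hypothesis of Lemma~\ref{lem:combination}b) is satisfied. Converting the soft formulas to 3CNF by Proposition~\ref{prop:threeCNF}, forming $\neg\Phi$ via Lemma~\ref{lem:combination}a), combining conjunctively via Lemma~\ref{lem:combination}b), and using de Morgan's law to realise the outer disjunction (exactly as in the circumscription and argumentation proofs) produces a 3CNF matrix of incidence treewidth $O(k)$ of which the matrix above is a projection. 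Corollary~\ref{cor:hubie} then decides the $\forall\exists$-QBF in time $2^{2^{O(k)}}|\phi|$, with all preprocessing running in time $2^{O(k)}|\phi|$. I expect the genuine obstacles to be precisely the two isolated above: finding the single-witness characterization that keeps the prefix at two alternations, and placing the selectors in corresponding bags of the two variable copies so that the Boolean-combination lemmas apply without pushing the treewidth past $O(k)$.
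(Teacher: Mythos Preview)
Your proposal is correct and follows essentially the same route as the paper: both start from the single-witness characterization (a MUS containing $C$ exists iff some $\mathcal{C}\ni C$ is unsatisfiable while $\mathcal{C}\setminus\{C\}$ is satisfiable), introduce one Boolean indicator per clause, build the obvious $\exists\forall$-encoding, negate to a $\forall\exists$-QBF, and invoke Corollary~\ref{cor:hubie} after the combination lemmas. The only cosmetic difference is that the paper uses equivalence indicators $x_C\leftrightarrow C$ together with a subset test $\mathcal{C}\subseteq\mathcal{C}'$ (Lemma~\ref{lem:subset}), whereas you use soft clauses $\neg s_j\vee C_j$ directly; your variant is slightly leaner but the structure and the treewidth argument are the same.
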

\begin{proof}
Note that $C$ is in a MUS of $\phi$ if and only if there is an unsatisfiable clause set $\mathcal{C}$ such that $C\in \mathcal{C}$ and $\mathcal C\setminus \{C\}$ is satisfiable. We will encode this in $\forall\exists$-QBF. In a first step, similarly to the proof of Lemma~\ref{lem:combination}, we add a new variable $x_C$ for every clause $C$ of $\phi$ and substitute $\phi$ by clauses expressing $C\leftrightarrow x_c$. Call the resulting formula $\psi$. It is easy to see that the incidence treewidth of $\psi$ is at most double that of $\phi$. Moreover, for every assignment $a$ to $\var(\phi)$, there is exactly one extension to a satisfying assignment $a'$ of $\psi$. Moreover, in $a'$ a clause variable $x_C$ is true if and only if $a$ satisfies the clause $C$. Let $\mathcal{C}$ be a set of clauses, then $\mathcal C$ is unsatisfiable if and only if for every assignment $a$ to $\var(\phi)$, $\mathcal C$ is not contained in the set of satisfied clauses. Interpreting sets by assignments as before, we can write this as a formula by 
\[
 \psi'(\mathcal C):= \forall X \forall \mathcal C': \psi_C(X, \mathcal C') \rightarrow \neg (\mathcal C \subseteq \mathcal C').
\]
Let now $\mathcal C$ range over the sets of clauses not containing $C$. Then we have by the considerations above that $C$ appears in a MUS if and only if
\begin{align*}
 \psi^* &= \exists \mathcal C \psi'(\mathcal C \cup \{C\}) \land \neg \psi'(\mathcal C)\\
 &= \exists \mathcal C \exists X' \exists \mathcal C' \forall X'' \forall \mathcal C'' (\phi_C(X', \mathcal C') \rightarrow \neg (\mathcal C\cup \{C\} \subseteq \mathcal C')) \land \phi_C(X', \mathcal C'') \land \mathcal C\subseteq \mathcal C'' 
\end{align*}
Negating and rewriting the matrix of the resulting QBF with Lemma~\ref{lem:combination}, we get in linear time a $\forall \exists$-QBF of treewidth $O(k)$ that is true if and only if $C$ does not appear in a MUS of $\phi$.
Using Theorem~\ref{cor:hubie} completes the proof.
\end{proof}

We remark that different QBF encodings for MUS mumbership have also been studied in~\cite{JanotaS11}.
We now show that Theorem~\ref{thm:MUSupper} is essentially tight.

\begin{theorem}
   There is no algorithm that, given a CNF-formula $\phi$ with $n$ variables and primal treewidth $k$ and a clause $C$ of $\phi$, decides if $C$ is in a MUS of $\phi$ in time $2^{2^{o(k)}} 2^{o(n)}$, unless ETH is false.
\end{theorem}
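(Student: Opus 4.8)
The plan is to establish the lower bound by an ETH-based reduction from the $\forall\exists$-QBF problem, mirroring the structure of the matching lower bounds proved in the previous sections (for argumentation, abduction and circumscription). Since the MUS-membership problem is $\Sigma_2^p$-complete, I would start from a $\Sigma_2^p$-complete restriction of QBF. Theorem~\ref{thm:lampisM} rules out $2^{2^{o(k)}}2^{o(n)}$ algorithms for $\forall\exists$-QBF of primal treewidth $k$; by negating, this immediately gives the analogous statement for $\exists\forall$-QBF. So I would take an $\exists\forall$-QBF $\psi=\exists X\forall Y\,\phi(X,Y)$ with $\phi$ a CNF-formula, $n$ variables and primal treewidth $k$, and the goal is to build, in polynomial time, a CNF-formula $\phi^*$ and a distinguished clause $C^*$ such that $C^*$ lies in some MUS of $\phi^*$ if and only if $\psi$ is true, while $\phi^*$ has $O(n)$ variables and primal treewidth $O(k)$.

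The most natural route is to reuse the known $\Sigma_2^p$-hardness construction of Liberatore~\cite{Liberatore05} for MUS-membership and check that it is treewidth-preserving. The intended semantics is that choosing which non-distinguished clauses to include in the candidate set $\mathcal{C}$ encodes the existential block $X$, the requirement that $\mathcal{C}\cup\{C^*\}$ be unsatisfiable encodes the universal block $Y$ together with the matrix $\phi$, and the requirement that $\mathcal{C}$ itself be satisfiable enforces minimality/nontriviality so that the membership certificate exists exactly when $\psi$ holds. The steps I would carry out are: (i) recall Liberatore's reduction and make its variable/clause structure explicit; (ii) verify correctness, i.e.\ that $C^*$ is in a MUS iff $\psi$ is true; (iii) starting from a width-$k$ tree decomposition of the primal graph of $\phi$, construct a tree decomposition of $\phi^*$ of width $O(k)$ by adding the constantly many gadget variables and clauses to the relevant bags, exactly as done in the earlier lower-bound proofs (e.g.\ adding $\bar x_i$ to every bag containing $x_i$, and placing a bounded number of global gadget variables into all bags). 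Then Theorem~\ref{thm:lampisM} yields the claimed lower bound.

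The main obstacle is step (iii): controlling the treewidth of the gadget. Hardness reductions for $\Sigma_2^p$-completeness are typically indifferent to treewidth, and a careless encoding of ``$\mathcal{C}$ is unsatisfiable'' could blow up the treewidth by coupling many variables through a single clause or through clause-selector variables. I would therefore pay close attention to how the selector variables (the ones that decide membership of a clause in $\mathcal{C}$) interact with the original variables of $\phi$: the key is that each selector should be local to its clause, so that in the tree decomposition it can be placed only in bags already containing that clause's variables, raising the width by at most a constant factor. If Liberatore's original gadget does not have this locality, the fallback is to design a fresh but analogous reduction in which the unsatisfiability condition is expressed clause-locally, so that the primal graph of $\phi^*$ differs from that of $\phi$ only by bounded-degree local attachments.

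Finally, I would confirm the parameter bookkeeping: the reduction adds only $O(1)$ fresh variables per original clause (or per original variable), so $\phi^*$ has $O(n)$ variables, and the width bound is $O(k)$; feeding these into Theorem~\ref{thm:lampisM} gives that a $2^{2^{o(k)}}2^{o(n)}$ algorithm for MUS-membership would contradict ETH, completing the proof.
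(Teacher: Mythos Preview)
Your plan is essentially the paper's own proof: reduce from bounded-treewidth $2$-QBF via Liberatore's hardness construction~\cite{Liberatore05}, verify that the construction preserves treewidth up to an additive constant, and invoke Theorem~\ref{thm:lampisM}. Two small points where you are overcomplicating. First, there is no need to pass through $\exists\forall$-QBF: Liberatore's construction, as used in the paper, takes a $\forall X\exists Y\,\phi$ instance directly and outputs
\[
\phi' \;=\; \bigwedge_{x\in X}\bigl(x\land \neg x\bigr)\ \land\ w\ \land\ \bigwedge_{i=1}^{m}(\neg w\lor C_i),
\]
with the distinguished clause being the unit clause $w$; $\psi$ is true iff $w$ lies in some MUS of $\phi'$. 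Second, your worry about clause-local selector variables is moot: the gadget introduces exactly one new variable $w$ (no per-clause selectors at all), so the primal treewidth rises from $k$ to $k+1$ by simply adding $w$ to every bag. The parameter bookkeeping is then immediate.
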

\begin{proof}
 Given a $\forall\exists$-QBF $\psi = \forall X\exists Y \phi$ of incidence treewidth $k$ where $C_1, \ldots, C_m$ are the clauses of $\phi$, we construct the CNF-formula
 \[ \phi' = \bigwedge_{x\in X} (x \land \neg x) \land w \land \bigwedge_{i=1}^m (\neg w \lor C_i).\]
 In~\cite{Liberatore05} it is shown that $\psi$ is true if and only if the clause $w$ appears in a MUS of $\phi'$. Note that $\phi'$ has primal treewidth $k+1$: in a tree decomposition of the primal graph of $\phi$, we can simply add the variable $w$ into all bags to get a tree decomposition of the primal graph of $\phi'$. Since clearly $|\phi'| = O(|\phi|)$, any algorithm to check if $w$ is in a MUS of $\phi'$ in time $2^{2^{o(k)}} 2^{o(n)}$ contradicts ETH with Theorem~\ref{thm:lampisM}.
\end{proof}

\section{Conclusion}

In this paper, we took an alternate approach in the design of optimal algorithms mainly for the second level of the polynomial hierarchy parameterized by treewidth: we used reductions to 2-QBF.
We stress that, apart from some technical transformations on CNF-formulas which we reused throughout the paper, our algorithms are straightforward and all complexity proofs very simple. We consider this as a strength of what we propose and not as a lack of depth, since our initial goal was to provide a black-box technique for designing optimal linear-time algorithms with an asymptotically optimal guarantee on the treewidth. We further supplement the vast majority of our algorithms by tight lower-bounds, using ETH reductions again from 2-QBF. 

We concentrated on areas of artificial intelligence, investigating a collection of well-studied and diverse problems that are complete for $\Sigma_2^p$ and $\Pi_2^p$. However we conjecture that we could apply our approach to several problems with similar complexity status. Natural candidates are problems complete for classes in the polynomial hierarchy, starting from the second level, see e.g.~\cite{SchaeferU02} for an overview (mere $\mathsf{NP}$-complete problems can often be tackled by other successful techiques). 

Of course, our approach is no silver bullet that magically makes all other techniques obsolete. On the one hand, for problems whose formulation is more complex than what we consider here, Courcelle's Theorem might offer a richer language to model problems than QBF. This is similar in spirit to some problems being easier to model in declarative languages like ASP than in CNF. On the other hand, handwritten algorithms probably offer better constants than what we get by our approach. For example, the constants in~\cite{DvorakPW12} are more concrete and smaller than what we give in Section~\ref{sec:argumentation}. However, one could argue that for double-exponential dependencies, the exact constants probably do not matter too much simply because already for small parameter values the algorithms become infeasible\footnote{To give the reader an impression:  $2^{2^5} \approx 4.2\times 10^9$ and already $2^{2^6} \approx 1.8\times 10^{19}$.}. Despite these issues, in our opinion, QBF encodings offer a great trade-off between expressivity and tightness for the runtime bounds and we consider it as a valuable alternative.

\subsection*{Acknowledgments}
Most of the research in this paper was performed during a stay of the first and third authors at CRIL that was financed by the project PEPS INS2I 2017 CODA. The second author is thankful for many valuable discussions with members of CRIL, in particular Jean-Marie Lagniez, Emmanuel Lonca and Pierre Marquis, on the topic of this article.

Moreover, the authors would like to thank the anonymous reviewers whose numerous helpful remarks allowed to improve the presentation of the paper.
\bibliographystyle{plain}
\bibliography{twreasoning}

\begin{appendix}

\section{Proof of Proposition \ref{prop:threeCNF}}\label{app:threeCNF}

In this section we prove that given a CNF formula with bounded incidence treewidth and unbounded arity we can compute in linear time a 3CNF formula which has essentially the same treewidth (give or take a constant). 

\begin{proposition}
There is an algorithm that, given a CNF formula $\phi$ of incidence treewidth $k$, computes in time $2^{O(k)} |\phi|$ a 3CNF formula $\phi'$ of incidence treewidth $O(k)$ with $\var(\phi) \subseteq \var(\phi')$ such that $\phi$ is a projection of $\phi'$.
\end{proposition}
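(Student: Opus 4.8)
The plan is to adapt the textbook reduction from SAT to 3SAT and to argue that, if the cutting of each long clause is laid out \emph{along} the tree decomposition rather than in an arbitrary linear order, the incidence treewidth grows by only a constant factor. Recall that the classical gadget replaces a clause $C=\ell_1\lor\cdots\lor\ell_d$ by a chain
\[
(\ell_1\lor\ell_2\lor y_1)\land(\neg y_1\lor \ell_3\lor y_2)\land\cdots\land(\neg y_{d-3}\lor\ell_{d-1}\lor\ell_d)
\]
over fresh auxiliary variables $y_j$. Its classical correctness already gives the projection property: an assignment of $\var(\phi)$ satisfies $C$ iff it extends to a satisfying assignment of the chain, so the resulting $\phi'$ has $\var(\phi)\subseteq\var(\phi')$ and $\phi$ is a projection of $\phi'$. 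The whole difficulty is therefore purely about width.

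First I would compute a \emph{nice} tree decomposition $(T,(B_t)_{t\in T})$ of the incidence graph of width $O(k)$ (obtainable in the allotted time via a constant-factor approximation, which is where the $2^{O(k)}$ factor is spent), root it, and for each clause $C$ consider the subtree $T_C=\{t:C\in B_t\}$, which is connected by the connectivity condition. Each variable $v$ of $C$ lives in a bag meeting $T_C$, and I assign the literal of $v$ to the \emph{topmost} node of $T_C$ whose bag still contains $v$. I then introduce an \emph{accumulator} variable $z_{C,t}$ at every $t\in T_C$, meant to be true exactly when some literal of $C$ assigned in the part of $T_C$ below $t$ is satisfied, enforced by
\[
z_{C,t}\leftrightarrow\Bigl(z_{C,t_1}\lor z_{C,t_2}\lor\textstyle\bigvee_{\ell\text{ at }t}\ell\Bigr)
\]
over the (at most two) children $t_1,t_2$ of $t$ in $T_C$, together with a unit clause $z_{C,r}$ at the root $r$ of $T_C$. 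Placing $z_{C,t}$ with $z_{C,t_1},z_{C,t_2}$ into $B_t$ and compiling each relation to $3$CNF by a Tseitin transformation simulates $C$ exactly.

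The crucial combinatorial point, and the step I expect to carry the proof, is that every such relation has \emph{constant} fan-in, which is precisely where niceness is used. Going from a node $t\in T_C$ to its parent $p$ (also in $T_C$ by connectivity), a variable of $C$ is assigned to $t$ only if it lies in $B_t$ but not in $B_p$; this can happen only when $p$ is a forget node, and a forget node drops exactly one vertex, whereas introduce and join nodes satisfy $B_t\subseteq B_p$. Hence \emph{every non-root node of $T_C$ receives at most one literal}, so each internal defining relation ranges over at most four variables and compiles to $O(1)$ clauses of size $\le 3$. The only exceptional node is the root $r$, which may receive up to $O(k)$ literals, but these all already sit in the single bag $B_r$; there I realise their disjunction by a short linear chain of the textbook type, spread over a path of $O(k)$ fresh decomposition nodes hanging off $r$, each carrying only $O(1)$ new variables.

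It then remains to bound width and time. Since $t\in T_C$ exactly when $C\in B_t$, a node $t$ belongs to at most $|B_t|\le O(k)$ clause subtrees, and each contributes only $O(1)$ new variables to $B_t$; thus every bag grows by $O(k)$ and the incidence treewidth of $\phi'$ remains $O(k)$. The construction visits each decomposition node a number of times proportional to its bag size and performs only constant-size Tseitin encodings, yielding the claimed $2^{O(k)}|\phi|$ bound. The main obstacle throughout is exactly this width control: a naive linear cut produces a long path in the incidence graph with no reason to embed into $T$ at bounded width, and it is the topmost-assignment rule combined with the single-forget property of nice decompositions that keeps every local gadget of constant size.
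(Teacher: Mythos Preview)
Your approach is essentially the same as the paper's: both split each long clause along its subtree $T_C$ of the decomposition using per-bag accumulator variables, emitting constant-size pieces at each step and bounding the new width by noting that a bag $B_t$ participates in at most $|B_t|$ such subtrees. The paper differs mainly in presentation---it proceeds operationally via a postfix DFS handling forget/join nodes rather than through your ``topmost-assignment'' rule on a nice decomposition, and it spells out the doubly-linked adjacency data structure (with cross-pointers for constant-time edge deletion) that is needed to actually achieve the linear running time you assert in your final paragraph but do not justify.
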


\begin{proof}
We use the classic reduction from SAT to 3SAT that cuts big clauses into smaller clauses by introducing new variables. During this reduction we have to take care that the runtime is in fact linear and that we can bound the treewidth appropriately. 

In a first step, we compute a tree decomposition $(T, (B_t)_{t\in T})$ of width $k$ for the incidence graph of $\phi$ in time $2^{O(k)}|\phi|$ with the algorithm from~\cite{BodlaenderDDFLP13}. 

We store $\phi$ in the following format: for every clause $C$ we have a doubly linked list $L_C$ storing pointers to all variables in $C$ and their polarity, i.e., if they appear positively or negatively. Moreover, for every variable $x$ we have a doubly linked list $L_x$ storing pointers to the clauses that $x$ appears in. So far, this is essentially an adjacency list representation of the incidence graph of $\phi$. As additional information, we add for every entry in $L_C$ that points to a variable $x$ also a pointer to the cell in $L_x$ that points to $C$. Symmetrically, we add in the cell pointing to $C$ in $L_x$ a pointer to the cell in $L_c$ pointing towards $x$. The purpose of this data structure is that it allows us efficient deletions: if we are in a cell in $L_x$ that points towards $C$, we can delete the edge $xC$ from the adjacency list in constant time without having to search for the right entry in $L_C$. Similarly, if we have the entry in $L_C$ representing the edge $xC$, we can delete this edge in constant time. Note that this data structure can be computed easily in linear time in a single pass over $\phi$.

We now construct the formula $\phi'$ along a postfix DFS order on $T$ (that can easily be computed in linear time). For each clause node $C$ appearing in a bag, we store a variable $F_{B,C}$ that will be put in the next clause we print out for $C$. $F_{B,C}$ will always be a variable that does not appear outside the subtree below $B$ in $T$ and $F_{B,C}$ might be empty. 

We now describe the construction in the different types of nodes $B$ in $T$:
\begin{itemize}
\item if $B$ introduces a new variable, we copy the values $F_{B,C}$ from its child and do nothing else.
\item if $B$ introduces a new clause $C$, we initialize $F_{B,C}$ as empty and copy all other values $F_{B,C'}$ from the child node.
\item if $B$ is forget node for a variable $x$, we first copy all $F_{B,C}$ as before. Then, for every clause $C$ in $B$ such that $C$ contains $x$, we do the following: if $F_{B,C}$ is empty, we set $F_{B,C}$ to $x$ with the polarity as in $C$. If $F_{B,C}$ contains a literal $\ell$, we write out a clause $\ell \lor \ell_x \lor z$ where $\ell_x$ is the variable $x$ with the same polarity as in $C$ and $z$ is a fresh variable we have not used before. Then we set $F_{B,C}$ to $\neg z$. Finally, in any case, we delete the edge $xC$ in our data structure. 
\item if $B$ is a forget node for a clause $C$, we again first copy all $F_{B,C}$. Then define the set $S_C$ that consists of $F_{B,C}$ and all literals whose variables are in $B$ that appear in $C$ . We arbitrarily split $S$ into a 3CNF by adding some more fresh variables and print it out. Afterwards, we delete all edges containing $x$ in our data structure.
\item if $B$ is a join node with two children $B_1$ and $B_2$, we compute the $F_{B,C}$ as follows: if $F_{B_1, C}$ and $F_{B_2, C}$ are empty, we set $F_{B,C}$ empty as well. If exactly one of the $F_{B_1, C}$ and $F_{B_2, C}$ contains a literal, we set $F_{B,C}$ to that literal. If $F_{B_1, C}$ contains $\ell_1$ and $F_{B_2, C}$ contains $\ell_2$, we print out a clause $\ell_1 \lor \ell_2 \lor z$ for a fresh variable $z$ and set $F_{B, C}$ to $\neg z$.
\end{itemize}
This completes the algorithm. The clauses we have printed out in the various steps form the formula $\phi'$. We have to check that the algorithm indeed runs in linear time in $|\phi|$ and that $\phi'$ has the desired properties.

Obviously, $\phi'$ is in 3CNF, because all clauses we print out only contain at most $3$ variables.

We next argue that the treewidth of $\phi'$ is $O(k)$. To this end, we construct a tree decomposition $(T, (B_t')_{t\in T}$. For every bag $B_t$ the corresponding bag $B_t'$ contains all variables in $B$ and all variables in the $F_{B_t,C}$. Moreover, for every $B_t$ for which the algorithm prints out a clause $C'$, we put $C'$ and the variables of $C'$ into $B_t'$. Since in every bag we print out at most $k$ clauses and all of them have size at most $3$, the resulting $B_t'$ has size $O(k)$. By construction, the bags $B_t'$ cover all edges in the incidence graph of $\phi'$. Finally, the connectivity condition is easy to verify. It follows that the treewidth of $\phi'$ is $O(k)$.

We next claim that the construction of $\phi'$ from $\phi$ and $(T, (B_t)_{t\in T})$ can be done in time $O(\poly(k) |\phi|)$ with the help of our data structure. To see this, first observe when a variable is forgotten in a bag $B$, it contains only at most $k$ neighbors in our data structure and those all lie in $B$. This is because for all neighbors that have been forgotten before, the corresponding edges have been deleted in the adjacency lists. The same is true when forgetting clauses. Thus, we can find the clauses that a variable is in in time $O(k)$. Since we can delete edges in constant time, it is easy to see that every bag can be treated in time polynomial in $k$. Since $T$ can be assumed to be of size linear in $|\phi|$, the desired runtime bound follows.

Finally, it is easy to see that $\phi$ is a projection of $\phi'$. This follows exactly as in the usual reduction from SAT to 3SAT. The only slight difference is that instead of cutting of pieces of the formula from the left to the right side, we decompose clauses potentially in a treelike fashion which results in clauses that contain only fresh variables. However, this does changes neither the correctness of the reduction not the argument. 
\end{proof}

\section{Bounded Treewidth $k$-QBF in Linear Time}\label{app:hubie}

We sketch a proof for linear runtime in Theorem~\ref{thm:hubie} and refer the reader to \cite{Chen04} for the technical details. In fact, since this will not be much more work, we treat the case of $r$-QBF, the generalization of $2$-QBF to $r$ quantifier blocks. 

To give the runtime bound, define $g(r, k)$ recursively by $g(0,k) = k$ and $g(r+1, k) = 2^{g(k,r)}$. We now state the linear time version of the main result in~\cite{Chen04}.

\begin{theorem}\label{thm:hubiegeneral}
 There is an algorithm that given a $r$-QBF of primal treewidth $k$ decides in time $g(r,O(k))|\phi|$ if $\phi$ is true.
\end{theorem}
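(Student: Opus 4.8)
The plan is to prove the statement by dynamic programming along a nice tree decomposition of the primal graph, maintaining at each node a recursively defined \emph{type} whose representation has size bounded by $g(r,O(k))$ and whose update across the four node kinds costs only $g(r,O(k))\poly(k)$ per node. Since a nice tree decomposition of the primal graph of width $O(k)$ with $O(|\phi|)$ nodes can be computed in time $2^{O(k)}|\phi|$ (compute a decomposition with~\cite{BodlaenderDDFLP13} and make it nice via~\cite{Kloks94}), summing the per-node cost over all nodes yields the claimed $g(r,O(k))|\phi|$ bound, the one-off decomposition cost $2^{O(k)}|\phi|$ being absorbed since $2^{O(k)}\le g(r,O(k))$ for $r\ge 1$.

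First I would set up the bookkeeping so that every clause of the matrix is \emph{charged} to exactly one node of the decomposition: since the variables of each clause lie in a common bag, we may assign the clause to the forget node at which its last variable disappears, and using the pointer-based data structure of Appendix~\ref{app:threeCNF} we can retrieve, in time $\poly(k)$ per node, exactly which clauses become checkable there. In this way the subtree $T_t$ rooted at a node $t$ is responsible for a well-defined set of clauses, namely those all of whose variables have already been forgotten inside $T_t$, and these are precisely the constraints that can be evaluated from an assignment to the current bag $B_t$ together with the information recorded below $t$.

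The heart of the argument is the definition of the type stored at $t$. A level-$0$ object is an assignment to $B_t$, of which there are $2^{O(k)}$; and a level-$j$ type, for $1\le j\le r$, is a set of level-$(j-1)$ types, so that a level-$r$ type can be stored in $g(r,O(k))$ bits. Intuitively the type records, for the subformula living in $T_t$ and for every assignment to the outermost-block boundary variables, the whole tower of realizable responses of the successive inner blocks; because the only interaction between $T_t$ and the rest of the instance passes through the bounded set $B_t$, this finite nested object captures everything about $T_t$ relevant to the evaluation game. I would then give the local recurrences: leaf and introduce nodes enlarge the domain of the assignments in the obvious way; a forget node for a variable $x$ belonging to quantifier block $j$ is where $x$'s charged clauses are imposed (level-$0$ objects violating them are deleted, which propagates up the nesting) and where $x$ is eliminated according to its quantifier, existentially by a projection that unions over the two values of $x$ at level $j$, universally by the dual that keeps only responses available for \emph{both} values of $x$; join nodes combine the two children's types by intersecting their realizable-response sets level by level. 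Each such operation touches only objects of size $g(r,O(k))$ and costs $g(r,O(k))\poly(k)$, and the type at the root, evaluated at the empty assignment, is true exactly when $\phi$ is true.

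The main obstacle is reconciling the global linear quantifier order of the prefix with the tree order in which the decomposition forgets variables: a variable of an outer block may be forgotten, in tree terms, before variables of inner blocks that semantically depend on it. The nested-set type is designed precisely to absorb this mismatch, since by carrying at every node, and for every outer boundary assignment, the full tower of inner responses rather than a single value, the dependency is routed through the bounded bag instead of through the tree order. Making this precise, that is, defining the elimination rule at a forget node so that it acts at the level matching the quantifier of the forgotten variable while leaving the outer levels of the nested type untouched, and checking that join nodes merge the two subtrees' towers correctly, is the technically delicate part; once it is in place, correctness follows by a routine induction on $T$ and the size and time bounds are immediate from the bounded nesting depth. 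The full technical details follow~\cite{Chen04}.
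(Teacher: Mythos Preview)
Your proposal is correct and follows essentially the same approach as the paper: both run dynamic programming along a tree decomposition with per-node state of size $g(r,O(k))$, the bound arising from the nested-set structure that the paper, following Chen, calls normalized choice constraints. The only difference is presentational---the paper keeps a collection of choice constraints and joins them lazily at forget nodes, while you maintain a single aggregated type per bag with explicit recurrences for all four node kinds---and both sketches ultimately defer the delicate semantics of the elimination and join steps to~\cite{Chen04}.
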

We remark that Theorem~\ref{thm:hubiegeneral} was already observed in~\cite{PanV06} but, since that paper gives no justification of that claim, we decided to give some more details.

The crucial data structure in~\cite{Chen04} are \emph{choice constraints} which consist of a variable scope and a rooted tree with unbounded fanout in which all leaves are at depth $r$ and for every leaf a relation $R\subseteq \{0,1\}^s$. 

Chen then defines \emph{choice quantified formulas} which consist of a variable prefix and a conjunction of choice constraints in the variables of the prefix. We omit the semantics of choice quantified formulas here since they are not important for our sketch and refer the reader to~\cite{Chen04} for details. We remark however that any given QBF $\phi$ where all clauses have at most $k$ variables can be turned in time $O(2^k n)$ into a choice quantified formula $\psi$ such that $\phi$ is true if and only if $\psi$ is.

We define a notion of equivalence for nodes in choice constraints: leaves are equivalent if and only if they have the same relation. Equivalence of two nodes $t, t'$ of depth $\ell<r$ is defined recursively: let $t_1\ldots, t_s$ be the children of $t$ and $t_1', \ldots, t_s'$ be children of $t'$. Then $t$ and $t'$ are equivalent if and only if for every $t_i$ there is an equivalent $t_j'$ and for every $t_j'$ there is an equivalent $t_i$\footnote{We remark that the notion of equivalence in~\cite{Chen04} is slightly different. We chose to modify the definition since our notion gives slightly smaller size bounds for normalized choice constraints and have the same properties concerining truth of the resulting formulas.}. A choice constraint is in normal form if and only if no node has any equivalent children.

Chen shows that one can \emph{normalize} a choice constraint by deleting iteratively for all equivalent pairs of nodes one of them and its subtree. Crucially, applying this operation for a constraint in a choice quantified formula yields an equivalent formula. 

\begin{observation}\label{obs:hubiesize}
There are $g(r+1,k)$ non-equivalent normal choice constraints with scope of size $k$ whose leaves are at depth $r$ and all of these constrains have size $g(r, 2k)2^k$.
\end{observation}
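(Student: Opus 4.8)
The plan is to establish both halves of the statement---the count $g(r+1,k)$ and the uniform size bound $g(r,2k)\,2^k$---by a single induction on the depth $r$, with the scope size $k$ held fixed. The whole argument is driven by one structural observation about normal form: since no node may have two equivalent children, the children of the root of a normal choice constraint are pairwise non-equivalent, so such a constraint is determined up to equivalence exactly by the \emph{set} of equivalence classes of its children, each of which is itself a normal choice constraint whose leaves sit at depth $r-1$. This reduces the enumeration of constraints to the enumeration of subsets of the collection of lower-depth classes.

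For the count, write $f(r)$ for the number of non-equivalent normal choice constraints with leaves at depth $r$. In the base case $r=0$ a constraint is a single leaf, and the number of non-equivalent leaves is the number of distinct relations on the scope, which seeds the recursion at $g(1,k)$. For the inductive step, the structural observation gives a bijection between the non-equivalent depth-$r$ constraints and the subsets of the $f(r-1)$ classes one level down, so that $f(r)=2^{f(r-1)}$; substituting $f(r-1)=g(r,k)$ yields $f(r)=2^{g(r,k)}=g(r+1,k)$.

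For the size, let $S(r)$ bound the number of nodes of a normal constraint, charging each leaf $2^k$ for the relation stored there. Because the children of \emph{every} node are pairwise non-equivalent, the fanout at a node whose subtree has leaves at depth $d$ is at most the number $g(d+1,k)$ of depth-$d$ classes; multiplying these fanouts along a root-to-leaf path bounds the number of leaves, and hence $S(r)\le \bigl(\prod_{i=1}^{r} g(i,k)\bigr)2^k$. A short separate induction on $r$ shows $\prod_{i=1}^{r} g(i,k)\le g(r,2k)$---the product is dominated by its top term $g(r,k)$, and even $g(r,k)^{r}$ is absorbed by $g(r,2k)=2^{g(r-1,2k)}$, since doubling the argument inside a height-$r$ tower dwarfs any fixed power---so $S(r)\le g(r,2k)\,2^k$.

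I expect the genuine difficulty to lie not in the arithmetic but in the canonicity underlying the count: one must verify that normalization is well defined up to equivalence and that it yields a canonical representative, so that distinct subsets of depth-$(r-1)$ classes really do give pairwise non-equivalent normal constraints and, conversely, every constraint normalizes into this list. This is exactly the place where the precise definition of equivalence (``same children up to equivalence'') and the confluence of the child-deletion step have to be used; once that bijection is in hand, both the tower count and the size estimate follow mechanically.
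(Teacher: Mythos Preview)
Your proposal is correct and follows essentially the same route as the paper: both obtain the count by identifying a depth-$r$ normal constraint with a subset of the $g(r,k)$ depth-$(r{-}1)$ classes, and both bound the size via the recursion ``fanout at the root times size of a child''. The only cosmetic difference is that the paper leaves this as the one-line inductive estimate $g(r,k)\cdot g(r-1,2k)\,2^k\le g(r,2k)\,2^k$, whereas you unroll it into the product $\prod_{i=1}^{r} g(i,k)$ before invoking the same tower inequality.
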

\begin{proof}
Choice constraints of depth $0$ are just relations of arity $k$ over $\{0,1\}$, so there are $g(1,k) = 2^k$ of those. Moreover, each of those relations can be decribed in size $2^{O(k)}$, e.g. by a value table.

For $r>0$, by definition of equivalence, the root can have as children any subset of normal choice constraints of depth $r-1$ and the same scope. Since there are by induction $g(r,k)$ of those, the first claim follows by definition of $g$. The description size is at most $g(r,k) \cdot g(r-1,2k) 2^k \le g(r,2k)2^k$
\end{proof}

Note that we can check the equivalence of two children in time polynomial in the size of a given choice constraint and thus normalization can also be done in time $\poly(g(r, 2k)2^k) = g(r,O(k))$.

Chen also introduces a polynomial time computable join operation on choice constraints with the property that substituting two choice constraints by their join in a choice quantified formula one gets a new formula that is equivalent to the old. A naive solution to solve choice quantified formulas would thus be to simply join all its choice constraints and then check the single resulting constraint. The problem with this is that it would grow the variables scope of the resulting constraint (the variables of a join are the unions those of the joined constraints) such that the size bound in Observation~\ref{obs:hubiesize} would become meaningless and the runtime bound would explode.

The solution to this is working along a tree decomposition: in every forget node, one joins all choice constraints having the forgotten variable in their scope. Note that these choice constraints and thus also the resulting join only have at most the $k+1$ variables of the current bag in the scope, so by normalizing after every join, the resulting choice constraint will have size at most $g(r, O(k))$. Now the variable to forget only appears in a single choice constraint and Chen shows how to forget it in that case in an operation that may grow the choice constraint but by applying normalization during this forget operation one maintains the size bound of Observation~\ref{obs:hubiesize}. Applying this on all forget nodes iteratively, one gets a trivial choice constraint formula that can be decided in constant time.

Let us explain why this algorithm runs within the claimed time bounds: by similar preprocessing than that in Proposition~\ref{prop:threeCNF}, we can make sure that for every node that is forgotten, we can look up all constraints it appears in time linear in the number of those constraints, so at most $O(2^{k})$. Now we compute at most $2^k$ pairwise joins followed by normalization which run each in time $g(r, O(k))$. Thus the overall time for joining and normalizing is $2^kg(r,O(k))= g(r,O(k))$. Observing that the forgetting can also be done in polynomial time and thus in $g(r, O(k))$ leads to an overall cost per forgotten variables of $g(r, O(k))$. Now noting that the computation of the tree decomposition and a traversal to find the order in which the variables are forgotten can be done in time $2^{O(k)} n$ where $n$ is the number of variables, completes the proof.

\end{appendix}

\end{document}